\title{No-Regret Learning for Fair Multi-Agent \\ Social Welfare Optimization}
\author{%
  Mengxiao Zhang \\
  University of Southern California\\
  \texttt{mengxiao.zhang@usc.edu}
  \And
  Ramiro Deo-Campo Vuong \\
  University of Southern California \\
\texttt{rdeocamp@usc.edu}
\And 
Haipeng Luo\\
University of Southern California\\
\texttt{haipengl@usc.edu}
}
\newif\ifspacehack
\renewcommand{\tilde}{\widetilde}
\newtheorem{theorem}{Theorem}[section]
\newtheorem{lemma}[theorem]{Lemma}
\def \R {\mathbb{R}}
\newcommand{\calE}{{\mathcal{E}}}
\newcommand{\calT}{{\mathcal{T}}}
\newcommand{\Reg}{\text{\rm Reg}}
\newcommand{\KL}{\text{\rm KL}}
\DeclareMathOperator*{\argmin}{argmin}
\DeclareMathOperator*{\argmax}{argmax}
\newcommand{\E}{\field{E}}
\newcommand{\inner}[1]{ \left\langle {#1} \right\rangle }
\newcommand{\wh}{\widehat}
\newcommand{\sto}{\mathrm{sto}}
\newcommand{\adv}{\mathrm{adv}}
\newcommand{\order}{\ensuremath{\mathcal{O}}}
\newcommand{\otil}{\ensuremath{\tilde{\mathcal{O}}}}
\renewcommand{\tilde}{\widetilde}
\newcommand \term[1]{\mathtt{Term}~(\mathtt{#1})}
\def \E {\mathbb{E}}
\def \R {\mathbb{R}}
\def \epsilon {\varepsilon}
\renewcommand{\ln}{\log}
\newcommand{\nsw}{\text{\rm NSW}}
\newcommand{\nswprod}{\text{\rm NSW}_{\text{\rm prod}}}
\newcommand{\pref}[1]{\prettyref{#1}}
\newcommand{\savehyperref}[2]{\texorpdfstring{\hyperref[#1]{#2}}{#2}}
\def \epsilon {\varepsilon}
\definecolor{gris245}{RGB}{245,245,245}
\definecolor{olive}{RGB}{50,140,50}
\definecolor{brun}{RGB}{175,100,80}
\newtheorem{event}{Event}
\begin{document}

\maketitle

\begin{abstract}
We consider the problem of online multi-agent Nash social welfare (NSW) maximization.
While previous works of \citet{hossain2021fair,jones2023efficient} study similar problems in stochastic multi-agent multi-armed bandits and show that $\sqrt{T}$-regret is possible after $T$ rounds, their fairness measure is the product of all agents' rewards, instead of their NSW (that is, their geometric mean).
Given the fundamental role of NSW in the fairness literature, it is more than natural to ask whether no-regret fair learning with NSW as the objective is possible.
In this work, we provide a complete answer to this question in various settings.
Specifically, in stochastic $N$-agent $K$-armed bandits, 
we develop an algorithm with $\otil(K^{\frac{2}{N}}T^{\frac{N-1}{N}})$ regret and prove that the dependence on $T$ is tight, making it a sharp contrast to the $\sqrt{T}$-regret bounds of~\citet{hossain2021fair,jones2023efficient}.
We then consider a more challenging version of the problem with adversarial rewards. 
Somewhat surprisingly, despite NSW being a concave function, we prove that no algorithm can achieve sublinear regret. 
To circumvent such negative results, we further consider a setting with full-information feedback and design two algorithms with $\sqrt{T}$-regret: the first one has no dependence on $N$ at all and is applicable to not just NSW but a broad class of welfare functions, while the second one has better dependence on $K$ and is preferable when $N$ is small.
Finally, we also show that logarithmic regret is possible whenever there exists one agent who is indifferent about different arms. 
\end{abstract}

\section{Introduction}\label{sec:intro}

In this paper, we study online multi-agent Nash social welfare (NSW) maximization, which is a generalization of the classic multi-armed bandit (MAB) problem~\citep{thompson1933likelihood, lai1985asymptotically}. Different from MAB, in which the learner makes her decisions sequentially in order to maximize her own reward, in online multi-agent NSW maximization, the learner's decision affects multiple agents and the goal is to maximize the NSW over all the agents. Specifically, NSW is defined as the geometric mean of the expected utilities over all agents~\citep{moulin2004fair}, which can be viewed as a measure of fairness among the agents. This problem includes many important real-life applications such as resource allocation~\citep{jones2023efficient}, where the learner needs to guarantee fair allocations among multiple agents.

Recent work by~\citet{hossain2021fair,jones2023efficient} studies a similar problem but with $\nswprod$ as the objective, a variant of NSW that is defined as the product of the utilities over agents instead of their geometric mean. 
While the optimal strategy is the same if the utility for each agent is stationary, 
this is not the case with a non-stationary environment.
Moreover, $\nswprod$ is homogeneous of degree $N$ instead of degree $1$, where $N$ is the number of agents, meaning that $\nswprod$ is more sensitive to the scale of the utility. Specifically, if the utilities of each agent are scaled by $2$, then $\nsw$ is scaled by $2$ as well, but $\nswprod$ is scaled by $2^N$. 
Therefore, it is arguably more reasonable to consider regret with respect to $\nsw$, which has not been studied before (to our knowledge) and is the main objective of our work.

From a technical perspective, however, due to the lack of Lipschitzness, $\nsw$ poses much more challenges in regret minimization compared to $\nswprod$. 
For example, one cannot directly apply the algorithm for Lipschitz bandits~\citep{kleinberg2019bandits} to our problem, while it is directly applicable to $\nswprod$ as mentioned in \citep{hossain2021fair,jones2023efficient}. 
Despite such challenges, we manage to provide complete answers to this problem in various setting.
Specifically, our contributions are listed below (where $T, N$, and $K$ denote the number of rounds, agents, and arms/actions respectively):
\begin{itemize}[leftmargin=*]
    \item (\pref{sec: sto}) We first study the stochastic bandit setting, where the utility matrix at each round is i.i.d. drawn from an unknown distribution, and the learner can only observe the utilities (for different agents) of the action she picked. In this case, we develop an algorithm with $\otil(K^{\frac{2}{N}}T^{\frac{N-1}{N}})$ regret.\footnote{The notation $\otil(\cdot)$ and $\widetilde{\Omega}(\cdot)$ hide logarithmic dependence on $K$, $N$, and $T$.}    
    While our algorithm is also naturally based on the Upper Confidence Bound (UCB) algorithm as in~\citet{hossain2021fair, jones2023efficient},
    we show that a novel analysis with Bernstein-type confidence intervals is important for handling the lack of Lipschitzness of $\nsw$.
    Moreover, we prove a lower bound of order $\widetilde{\Omega}(\frac{1}{N^3}\cdot K^{\frac{1}{N}}T^{\frac{N-1}{N}})$, showing that the dependence on $T$ is tight. This is in sharp contrast to the $\sqrt{T}$-regret bound of~\citet{hossain2021fair, jones2023efficient} and demonstrates the difficulty of learning with $\nsw$ compared to $\nswprod$.
    
    \item (\pref{sec: adv-lower-bound}) We then consider a more challenging setting where the utility matrix at each round can be adversarially chosen by the environment. Somewhat surprisingly, we show that no algorithm can achieve sublinear regret in this case,
    despite $\nsw$ being concave and the vast literature on bandit online maximization with concave utility functions (the subtlety lies in the slightly different feedback model).
    In fact, the same impossibility result also holds for $\nswprod$ as we show.
    
    \item (\pref{sec: adv-full-info}) To bypass such impossibility, we further consider this adversarial setting under richer feedback, where the learner observes the full utility matrix after her decision (the so-called full-information feedback). 
    Contrary to the bandit feedback setting, learning is not only possible now but can also be much faster despite having adversarial utilities.
    Specifically,
    we design two different algorithms with $\sqrt{T}$-regret. The first algorithm is based on Follow-the-Regularized-Leader (FTRL) with the log-barrier regularizer, which achieves $\order(\sqrt{KT\log T})$ regret (\pref{sec: log-barrier}). Notably, this algorithm does not have any dependence on the number of agents $N$ and can also be generalized to a broader class of social welfare functions. The second algorithm is based on FTRL with a Tsallis entropy regularizer, which achieves $\otil(K^{\frac{1}{2}-\frac{1}{N}}\sqrt{NT})$ regret and is thus more favorable when $K$ is much larger than $N$ (\pref{sec: tsallis-inf}).
    Finally, we also show that improved logarithmic regret is possible as long as at each round there exists at least one agent who is indifferent about the learner's choice of arm 
    (\pref{sec: exp-concave}).
\end{itemize}

\subsection{Related Work}
\citet{hossain2021fair,jones2023efficient} are most related to our work.
\citet{hossain2021fair} is the first to consider designing no-regret algorithms under $\nswprod$ for the stochastic multi-agent multi-armed bandit problem. Specifically, they propose two algorithms. The first one is based on $\epsilon$-greedy and achieves $\order(T^{\frac{2}{3}})$ regret efficiently, and the second one is based on UCB and achieves $\otil(\sqrt{T})$ regret inefficiently.
\citet{jones2023efficient} improves these results by providing a better UCB-based algorithm that is efficient and achieves the same $\otil(\sqrt{T})$ regret. 
To the best of our knowledge, there are no previous results for regret minimization over $\nsw$ under this particular setup.

However, several other models of fairness have been introduced in (single-agent or multi-agent) multi-armed bandits, some using NSW as well.
These models differ in whether they aim to be fair among different objectives, different arms, different agents, different rounds, or others.
Most related to this paper is multi-objective bandits, in which the learner tries to increase different and possibly competing objectives in a fair manner.
For example,
\cite{drugan2013designing} introduces the multi-objective stochastic bandit problem and offers a regret measure to explore Pareto Optimal solutions,
and \cite{busa2017multi} investigates the same setting using the Generalized Gini Index in their regret measure to promote fairness over objectives. Their regret measure closely resembles the one we use, except they apply some social welfare function (SWF) to the cumulative expected utility of agents over all rounds as opposed to the expected utility of agents each round.
On the other hand, some other works study fairness among different rounds which incentivizes the learner to perform well consistently over all rounds~\citep{barman2023fairness, sawarni2024nash}.
Yet another model we are aware of measures fairness with respect to how often each arm is pulled~\citep{joseph2016fairness, liu2017calibrated, gillen2018online, chen2020fair}.

\cite{kaneko1979nsw} axiomatically derives the NSW function.
It is a fundamental and widely-adopted fairness measure and is especially popular for the task of fairly allocating goods.
\cite{caragiannis2019unreasonable} justifies the fairness of NSW by showing that its maximum solution ensures some desirable envy-free property. %up to one good.
This result prompted the design of approximation algorithms for the problem of allocating indivisible goods by maximizing NSW, which is known to be NP-hard even for simple valuation functions~\citep{barman2018finding,cole2015approximating,garg2023approximating,li2021estimating}.

There is a vast literature on the multi-armed bandit problem; see the book by~\citet{lattimore2020bandit} for extensive discussions.
The standard algorithm for the stochastic setting is UCB~\citep{lai1985asymptotically, auer2002finite}, while the standard algorithm for the adversarial setting is FTRL or the closely related Online Mirror Descent (OMD) algorithm~\citep{auer2002nonstochastic, audibert2010regret, abernethy2015fighting}.
For FTRL/OMD, the log-barrier or Tsallis entropy regularizers have been extensively studied in recent years due to some of their surprising properties (e.g.,~\citep{foster2016learning,wei2018more,zimmert2019optimal,lee2020bias}).
They are rarely used in the full-information setting as far as we know, but our analysis reveals that they are useful even in such settings, especially for dealing with the lack of Lipschitzness of \nsw.

\section{Preliminaries}\label{sec:pre}
\paragraph{General Notation.}
Throughout this paper, we denote the set $\{1,2,\dots,n\}$ by $[n]$ for any positive integer $n$.
For a matrix $M\in \R^{m\times n}$, we denote the $i$-th row vector of $M$ by $M_{i,:}\in \R^n$, the $j$-th column vector of $M$ by $M_{:,j}\in\R^m$, and the $(i,j)$-th entry of $M$ by $M_{i,j}$. 
We say $M\succeq 0$ if $M$ is a positive semi-definite matrix.
The $(K-1)$-dimensional simplex is denoted as $\Delta_K$, and its clipped version with a parameter $\delta > 0$ is denoted as $\Delta_{K,\delta} = \{ p\in\Delta_K \mid p_{i} \ge \delta, \forall i\in [K] \}$.
We use $\mathbf{0}$ and $\mathbf{1}$ to denote the all-zero and all-one vector in an appropriate dimension.
For two random variables $X$ and $Y$, we use $X \stackrel{d}{=} Y$ to say $X$ is equivalent to $Y$ in distribution.

For a twice differentiable function $f$,
we use $\nabla f(\cdot)$ and $\nabla^2 f(\cdot)$ to denote its gradient and Hessian.
For concave functions that are not differentiable, $\nabla f(\cdot)$ denotes a super-gradient. 
Throughout the paper, we study functions of the form $f(u^\top p)$ for $u\in [0,1]^{m\times n}$ and $p\in\Delta_m$.
In such cases, the gradient, super-gradient, or hessian are all with respect to $p$ unless denoted otherwise (for example, we write $\nabla_u f(u^\top p)$, with an explicit subscript $u$, to denote the gradient with respect to $u$).

\paragraph{Social Welfare Functions}
A social welfare function (SWF) $f:[0,1]^{N}\to [0,1]$ measures the desirability of the agents' expected utilities.
Specifically, for two different vectors of expected utilities $\mu,\mu'\in[0,1]^{N}$, $f(\mu) > f(\mu')$ means that $\mu$ is a fairer alternative than $\mu'$.
In each setting we explore, each action by the learner yields some expected utility for each of the $N$ agents, and the learner's goal is maximize some SWF applied to these $N$ expected utilities.

\paragraph{Nash Social Welfare (NSW)}
For the majority of this paper, we focus on a specific type of SWF, namely the Nash Social Welfare (NSW) function~\citep{nash1950bargaining,kaneko1979nsw}. Specifically, for $\mu\in [0,1]^{N}$, $\nsw$ is defined as the geometric mean of the $N$ coordinates:
\begin{align}\label{eqn:nsw}
    \nsw(\mu) = \prod_{n\in [N]} \mu_n^{1/N}.
\end{align}
As mentioned, \citet{hossain2021fair, jones2023efficient} considered a closely related variant that is simply the product of the coordinates: $\nswprod(\mu) = \prod_{n\in [N]} \mu_n$. 
It is clear that $\nsw$ has a better scaling property since it is homogeneous: scaling each $\mu_n$ by a constant $c$ also scales $\nsw(\mu)$ by $c$, but it scales $\nswprod(\mu)$ by $c^N$.
This makes $\nswprod$ an unnatural learning objective,
which motivates us to use $\nsw$ as our choice of SWF.
Learning with $\nsw$, however, brings extra challenges since it is not Lipschitz in the small-utility regime (while $\nswprod$ is Lipschitz over the entire $[0,1]^N$). 
We shall see in subsequent sections how we address such challenges.

We remark that while our main focus is regret minimization with respect to NSW, some of our results also apply to $\nswprod$ or more general classes of SWFs (as will become clear later).

\paragraph{Problem Setup.} The $N$-agent $K$-armed social welfare optimization problem we consider is defined as follows. 
Ahead of time, with the knowledge of the learner's algorithm, the environment decides $T$ utility matrices $u_1, \ldots, u_T \in [0,1]^{K\times N}$, where $u_{t, i,n}$ is the utility of agent $n$ if arm/action $i$ is selected at round $t$.
Then, the learner interacts with the environment for $T$ rounds:
at each round $t$, the learner decides a distribution $p_t\in\Delta_K$ and then samples an action $i_t\sim p_t$. 
In the full-information feedback setting, the learner observes the full utility matrix $u_t$ after her decision, and in the bandit feedback setting, the learner only observes $u_{t,i_t,n}$ for each agent $n\in[N]$, that is, the utilities of the selected action.

We consider two different types of environments, the \emph{stochastic} one and the \emph{adversarial} one, with a slight difference in their regret definition. In the stochastic environment, there exists a mean utility matrix $u\in[0,1]^{K\times N}$ such that at each round $t$, $u_{t}$ is an i.i.d. random variable with mean $u$. 
Fix an SWF $f$.
The social welfare of a strategy $p\in\Delta_K$ is defined as $f(u^\top p)$, which is with respect to the agents' expected utilities over the randomness of both the learner's and the environment's.
The regret is then defined as follows:
\begin{align}\label{eqn:reg_sto}
    \Reg_{\sto}=T\cdot \max_{p\in\Delta_K}f(u^\top p) - \E\left[\sum_{t=1}^Tf(u^\top p_t)\right],
\end{align}
which is the difference between the total social welfare of the optimal strategy and that of the learner.
When $f$ is chosen to be $\nswprod$, \pref{eqn:reg_sto} reduces to the regret notion considered in~\citet{hossain2021fair,jones2023efficient}. 

On the other hand, in the adversarial environment, we do not make any distributional assumption on the utility matrices and allow them to be selected arbitrarily. The social welfare of a strategy $p\in\Delta_K$ for time $t$ is defined as $f(u_t^\top p)$, and the overall regret of the learner is correspondingly defined as:
\begin{align}\label{eqn:reg_adv}
    \Reg_{\adv}=\max_{p\in\Delta_K}\sum_{t=1}^Tf(u_t^\top p) - \E\left[\sum_{t=1}^Tf(u_t^\top p_t)\right].
\end{align}
In both \pref{eqn:reg_sto} and \pref{eqn:reg_adv}, 
the expectation is taken with respect to the randomness of the algorithm.

\paragraph{Connection to Bandit Convex optimization.}
When taking $f=\nsw$ (our main focus) and considering the bandit feedback setting,
our problem is seemingly an instance of the heavily-studied Bandit Convex optimization (BCO) problem, since $\nsw$ is concave.
However, there is a slight but critical difference in the feedback model: 
a BCO algorithm would require observing $f(u^\top_t p_t)$, or equivalently $u^\top_t p_t$, at the end of each round $t$, while in our problem the learner only observes $u_{t,i_t,:}$, a much more realistic scenario.
Even though they have the same expectation, due to the non-linearity of $\nsw$,
this slight difference in the feedback turns out to cause a huge difference in terms of learning --- the minimax regret for BCO is known to be $\Theta(\sqrt{T})$, while in our problem (with bandit feedback), as we will soon show, the regret is either $\Theta(T^{\frac{N-1}{N}})$ in the stochastic setting or even $\Omega(T)$ in the adversarial setting.
Therefore, in a sense our problem is much more difficult than BCO.
For more details on BCO, we refer the reader to a recent survey by~\citet{lattimore2024bandit}.

\section{Stochastic Environments with Bandit Feedback}\label{sec: sto}

In this section, we consider regret minimization over $f=\nsw$ with bandit feedback in the stochastic setting, where 
the utility matrix $u_{t}$ at each round $t\in[T]$ is i.i.d. drawn from a distribution with mean $u$.
Again, this is the same setup as~\citet{hossain2021fair, jones2023efficient} except that $\nswprod$ is replaced with $\nsw$.
\subsection{Upper Bound: a Refined Analysis of UCB with a Bernstein-Type Confidence Set}\label{sec: stoUpper}
\begin{algorithm}[t]
\caption{UCB for $N$-agent $K$-armed $\nsw$ maximization}\label{alg:UCB_Berstein}
Input: warm-up phase length $N_0>0$.

Initialization: $\wh{u}_{1,i,n}=1$ for all $n\in[N]$, $i\in[K]$. $N_{1,i}=0$ for all $i\in[K]$.

\For{$t=1,2,\dots,T$}{

    \lIf{$t\leq KN_0$}{
        select $i_t=\lceil\frac{t}{N_0}\rceil$
    }
    \lElse{calculate $p_t=\argmax_{p\in\Delta_K}\nsw(p,\wh{u}_t)$ and select $i_t\sim p_t$
    }

    Observe $u_{t,i_t,n}$ for all $n\in [N]$.
    
    Update counters $N_{t+1,i_t} = N_{t,i_t}+1$ and $N_{t+1,i} = N_{t,i}$ for $i\neq i_t$.
    
    Update upper confidence utility matrix:
    \begin{align}\label{eqn:alg_bern}
    \wh{u}_{t+1,i,n}=\bar{u}_{t,i,n}+4\sqrt{\frac{\bar{u}_{t,i,n}\log (NKT^2)}{N_{t+1,i}}}+\frac{8\log (NKT^2)}{N_{t+1,i}},
    \end{align}
    for all $n\in[N]$ and $i\in[K]$ where $\bar{u}_{t,i,n}=\frac{1}{N_{t+1,i}}\sum_{\tau\leq t}u_{\tau,i,n}\mathbbm{1}\{i_\tau=i\}$.
}
\end{algorithm}

We start by describing our algorithm and its regret guarantee, followed by discussion on what the key ideas are and how the algorithm/analysis is different from previous work.
Specifically, our algorithm, shown in~\pref{alg:UCB_Berstein}, is based on the classic UCB algorithm.
It starts by picking each action for $N_0=\otil(1)$ rounds.
After this warm-up phase, at each time $t$, the algorithm picks the optimal strategy $p_t$ that maximizes the $\nsw$ with respect to some upper confidence utility matrix $\wh{u}_t$. After sampling an action $i_t\sim p_t$, the algorithm observes the utility of each agent for action $i_t$ and then updates the upper confidence utility matrix $\wh{u}_{t+1}$ as the empirical average utility plus a certain Bernstein-type confidence width (\pref{eqn:alg_bern}). 

The following theorem shows that \pref{alg:UCB_Berstein} guarantees $\otil(K^{\frac{2}{N}}T^{\frac{K-1}{K}})$ expected regret (with $f$, in the definition of $\Reg_\sto$, set to $\nsw$; the same below unless stated otherwise). The full proof can be found in \pref{app:sto}.

\begin{restatable}{theorem}{stoUpper}\label{thm: stoUpper}
    With $N_0=1+18\log KT$, \pref{alg:UCB_Berstein} guarantees $\E\left[\Reg_{\sto}\right]=\otil(K^{\frac{2}{N}}T^{\frac{N-1}{N}}+K)$.
\end{restatable}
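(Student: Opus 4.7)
The plan is to follow the optimism paradigm but exploit two structural properties of $\nsw$: concavity for the optimism step, and the factorization $\partial_n\nsw(\mu)=\nsw(\mu)/(N\mu_n)$, which lets us trade an a priori unbounded gradient for a summable quantity via a Bernstein-type argument. First, I would define a good event $\mathcal{E}$ on which, for every round $t$, arm $i$, and agent $n$, the empirical mean satisfies $|\bar u_{t,i,n}-u_{i,n}|\leq 2\sqrt{u_{i,n}\log(NKT^2)/N_{t+1,i}}+4\log(NKT^2)/N_{t+1,i}$. A martingale Bernstein inequality plus a union bound gives $\Pr[\mathcal{E}^c]=O(1/T)$, and since $r_t\leq 1$ deterministically the bad event contributes only $O(1)$ in expectation. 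On $\mathcal{E}$, the width chosen in \pref{eqn:alg_bern} is designed so that both (i)~$\hat u_{t,i,n}\geq u_{i,n}$ and (ii)~$\hat u_{t,i,n}-u_{i,n}\lesssim \sqrt{u_{i,n}\log(NKT)/N_{t+1,i}}+\log(NKT)/N_{t+1,i}$ hold; combining (i) with coordinate-wise monotonicity of $\nsw$ and the fact that $p_t$ maximises $\nsw(\hat u_t^\top\cdot)$ yields the standard optimism inequality $r_t\leq \nsw(\hat\mu_t)-\nsw(\mu_t)$ where $\mu_t=u^\top p_t$ and $\hat\mu_t=\hat u_t^\top p_t$.

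For the per-round bound, concavity of $\nsw$ together with the gradient identity gives
\begin{align*}
\nsw(\hat\mu_t)-\nsw(\mu_t)\;\leq\;\sum_{n=1}^N\frac{\nsw(\mu_t)}{N\mu_{t,n}}\bigl(\hat\mu_{t,n}-\mu_{t,n}\bigr).
\end{align*}
Substituting (ii) and applying Cauchy--Schwarz, $\sum_{i}p_{t,i}\sqrt{u_{i,n}/N_{t+1,i}}\leq \sqrt{\mu_{t,n}\cdot S_t}$ with $S_t:=\sum_{i}p_{t,i}/N_{t+1,i}$, produces (up to logarithmic factors) a bound of the form $r_t\lesssim \sum_n\nsw(\mu_t)\mu_{t,n}^{-1/2}\sqrt{S_t}+\sum_n\nsw(\mu_t)\mu_{t,n}^{-1}S_t$. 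The critical quantity is $\nsw(\mu_t)\mu_{t,n}^{-1/2}=\mu_{t,n}^{1/N-1/2}\prod_{n'\neq n}\mu_{t,n'}^{1/N}$, which for $N\geq 2$ can blow up when some $\mu_{t,n}$ is small.

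To aggregate, I would cap each term at the trivial bound $r_t\leq 1$ and split rounds by a threshold $\tau$: if $\min_n\mu_{t,n}\geq\tau$ the per-round bound is polynomial in $1/\sqrt{\tau}$ and $S_t$, while if $\min_n\mu_{t,n}<\tau$ then $\nsw(\mu_t)\leq \tau^{1/N}$ is small and the trivial cap $r_t\leq 1$ suffices. Summing over $t$ uses the standard counting identity $\E[\sum_t S_t]=O(K\log T)$, which holds since $i_t\sim p_t$ and $\sum_{k=1}^{N_{T,i}}1/k\leq \log T$ for each arm. Balancing $\tau$ against $K/T$ produces a cumulative bound of $\otil(K^{2/N}T^{(N-1)/N})$; together with the warm-up cost $O(KN_0)=O(K\log(KT))$ and the $O(1)$ bad-event contribution this gives the claimed $\otil(K^{2/N}T^{(N-1)/N}+K)$.

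The main obstacle is this aggregation step: converting the singular factor $\mu_{t,n}^{1/N-1/2}$ into a summable quantity. This is precisely where the Bernstein (rather than Hoeffding) width is essential, because the $\sqrt{u_{i,n}}$ variance proxy is what allows the Cauchy--Schwarz step to trade $1/\mu_{t,n}$ for $1/\sqrt{\mu_{t,n}}$; with a Hoeffding bound one would be stuck with the more singular factor and a strictly worse dependence on $T$. Choosing $\tau$ and the H\"older exponents so that this factor, the Bernstein term, and the log term combine into exactly $K^{2/N}T^{(N-1)/N}$ is the principal technical content of the proof, while all other pieces (optimism, the good event, warm-up bookkeeping) are standard.
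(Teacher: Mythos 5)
Your overall architecture (optimistic UCB decomposition, Bernstein good event, per-round bound via the gradient of $\nsw$, a threshold split on $\min_n \mu_{t,n}$, the counting bound $\sum_t S_t = \otil(K)$, and balancing the threshold) matches the paper, and your treatment of the rounds with $\min_n \mu_{t,n}\geq \tau$ is essentially the paper's argument: your concavity-plus-gradient step plays the role of the paper's algebraic decomposition (\pref{lem:algebra_prod}), and your per-round Cauchy--Schwarz $\sum_i p_{t,i}\sqrt{u_{i,n}/N_{t,i}} \leq \sqrt{\mu_{t,n} S_t}$ is the same variance-proxy trade the paper performs, so that part goes through.

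The genuine gap is in the rounds with $\min_n \mu_{t,n} < \tau$. You argue that there ``$\nsw(\mu_t)\leq \tau^{1/N}$ is small and the trivial cap $r_t\leq 1$ suffices,'' but a small \emph{achieved} $\nsw(\mu_t)$ does not make the per-round regret small --- the benchmark $\nsw(u^\top p^\star)$ can still be order one, so capping at $1$ only bounds the contribution of these rounds by their count, for which you have no control. Indeed, in the paper's own lower-bound instance ($u_{:,1}=\epsilon\, \mathbf{e}_i$, all other agents constant) \emph{every} strategy satisfies $\inner{p,u_{:,1}}\leq \epsilon<\tau$, so every round is a ``bad'' round and your bound degenerates to $\Omega(T)$. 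The correct handling, as in the paper, is to stay inside the optimism decomposition: the per-round regret is at most $\nsw(\wh{u}_t^\top p_t)-\nsw(u^\top p_t)\leq \nsw(\wh{u}_t^\top p_t)\leq \inner{p_t,\wh{u}_{t,:,n_t}}^{1/N}$ for the low-utility agent $n_t$, and the Bernstein event gives the multiplicative bound $\wh{u}_{t,i,n}\leq 2u_{i,n}+\otil(1/N_{t,i})$, hence $\inner{p_t,\wh{u}_{t,:,n_t}}\leq 2\tau + \otil(S_t)$ and a per-round bound of $(2\tau)^{1/N}+\otil(S_t^{1/N})$, which sums (via H\"older and the counting lemma) to $T\tau^{1/N}+\otil(K^{1/N}T^{(N-1)/N})$. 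This is the second, and equally essential, use of the Bernstein-type width: not only the $\sqrt{u_{i,n}}$ variance proxy you identified for the large-utility rounds, but also the guarantee that the \emph{optimistic} estimate is small whenever the true utility is small, which is what rescues the low-utility rounds. Without this step the balancing of $\tau$ you describe cannot be carried out, so the proposal as written does not establish the claimed $\otil(K^{2/N}T^{(N-1)/N}+K)$ bound.
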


Other than replacing $\nswprod$ with $\nsw$,
our algorithm differs from that of~\citet{jones2023efficient} in the form of the confidence width, and the analysis sketch below explains why we need this change.
Specifically, for either $f = \nswprod$ or $f=\nsw$, standard analysis of UCB states that the regret is bounded by $\sum_{t=1}^T\left|f(\wh{u}_t^\top p_t)-f(u^\top p_t)\right|$.
When $f$ is $\nswprod$, a Lipschitz function, \citet[Lemma~3]{hossain2021fair} shows 
\begin{align}\label{eqn:Lipschitz_nswprod}
    \left|\nswprod(\wh{u}_t^\top p_t)-\nswprod(u^\top p_t)\right| \leq \sum_{n=1}^N\sum_{i=1}^Kp_{t,i}\left|\wh{u}_{t,i,n}-u_{i,n}\right|,   
\end{align}
and the rest of the analysis follows by direct calculations. 
However, when $f$ is $\nsw$, a non-Lipschitz function, we cannot expect something similar to \pref{eqn:Lipschitz_nswprod} anymore.
Indeed, direct calculation shows that the Lipschitz constant of $\nsw(u^\top p)$ with respect to $u_{:,n}$ equals to $\Theta\left(\sum_{n=1}^N\inner{p,u_{:,n}}^{-\frac{N-1}{N}}\right)$, which can be arbitrarily large when $\inner{p,u_{:,n}}$ is close to $0$ for some $n\in[N]$ and $N\geq 2$. 

To handle this issue, we require a more careful analysis. Specifically, using Freedman's inequality, we know that with a high probability, 
\begin{align}\label{eqn:bernstein}
    \wh{u}_{t,i,n}\in \left[u_{i,n}, u_{i,n}+8\sqrt{\frac{u_{i,n}\log (NKT^2)}{N_{t,i}}}+\otil\left(1/N_{t,i}\right)\right]\subseteq\left[u_{i,n}, 2u_{i,n}+\otil\left(1/N_{t,i}\right)\right].
\end{align}
With the help of \pref{eqn:bernstein}, we consider two different cases at each round $t$. The first case is that there exists certain $n\in[N]$ such that $\inner{p_t,u_{:,n}}\leq \sigma$ for some $\sigma>0$ to be chosen later. In this case, we use \pref{eqn:bernstein} to show
\begin{align}\label{eqn:step_one}
    \left|\nsw(\wh{u}_t^\top p_t)-\nsw({u}^\top p_t)\right| &\leq \order\left(\nsw({u}^\top p_t)\right)+\otil\Bigg(\bigg(\sum_{i=1}^K\frac{p_{t,i}}{N_{t,i}}\bigg)^{\frac{1}{N}}\Bigg) \nonumber\\
    &\leq \sigma^{\frac{1}{N}}+\otil\Bigg(\bigg(\sum_{i=1}^K\frac{p_{t,i}}{N_{t,i}}\bigg)^{\frac{1}{N}}\Bigg),
\end{align}
where the first inequality uses \pref{eqn:bernstein} and the second inequality is because $\nsw({u}^\top p_t)\leq \inner{p_t,u_n}^{\frac{1}{N}}$ for any $n\in[N]$. For the second term in \pref{eqn:step_one}, a standard analysis shows that it is upper bounded by $\otil\big(K^{\frac{1}{N}}T^{\frac{N-1}{N}}\big)$.

Now we consider the case where $\inner{p_t,u_{:,n}}\geq\sigma$ for all $n\in[N]$. In this case, via a decomposition lemma (\pref{lem:algebra_prod}), we show that
\begin{align}\label{eqn:step_two}
    \left|\nsw(\wh{u}_t^\top p_t)-\nsw({u}^\top p_t)\right| \leq \sum_{n=1}^N\left[\inner{p_t,\wh{u}_{t,:,n}}^{\frac{1}{N}}-\inner{p_t,{u}_{:,n}}^{\frac{1}{N}}\right] = \order\left(\sum_{n=1}^N\frac{\inner{p_t,\wh{u}_{t,:,n}-u_{:,n}}}{N\inner{p_t,u_{:,n}}^{\frac{N-1}{N}}}\right).
\end{align}
To bound \pref{eqn:step_two}, we use \pref{eqn:bernstein} again:
\begin{align}\label{eqn:step_three}
    \frac{\inner{p_t,\wh{u}_{t,:,n}-u_{:,n}}}{\inner{p_t,u_{:,n}}^{\frac{N-1}{N}}} \leq \order\left(\frac{1}{\inner{p_t,u_{:,n}}^{\frac{N-1}{N}-\frac{1}{2}}}\sum_{i=1}^K\sqrt{\frac{p_{t,i}}{N_{t,i}}}\right)\leq\order\left(\sigma^{\frac{1}{2}-\frac{N-1}{N}}\sum_{i=1}^K\sqrt{\frac{p_{t,i}}{N_{t,i}}}\right),
\end{align}
where the last inequality is due to the condition  $\inner{p_t,u_{:,n}}\geq\sigma$ for all $n\in[N]$. 
Finally, combining \pref{eqn:step_one}, \pref{eqn:step_two}, \pref{eqn:step_three}, followed by direct calculations, we show that
\begin{align*}
    \E\left[\Reg_{\sto}\right]\leq \sum_{t=1}^T\left|\nsw(\wh{u}_t^\top p_t)-\nsw({u}^\top p_t)\right| \leq \otil\left(T\sigma^{\frac{1}{N}}+K^{\frac{1}{N}}T^{\frac{N-1}{N}}+\sigma^{\frac{1}{2}-\frac{N-1}{N}}K\sqrt{T}\right).
\end{align*}
Picking the optimal choice of $\sigma$ finishes the proof. 

We now highlight the importance of using a Bernstein-type confidence width in \pref{eqn:alg_bern}: if the standard Hoeffding-type confidence width is used instead, then one can only obtain $\wh{u}_{t,i,n}-u_{i,n}\leq \order(\sqrt{\frac{1}{N_{t,i}}})$, and consequently, \pref{eqn:step_two} can only be bounded by $\order\left(\sigma^{-\frac{N-1}{N}}\sqrt{KT}\right)$ after taking summation over $t\in[T]$. This eventually leads to a worse regret bound of $\otil(K^{\frac{1}{2N}}T^{\frac{2N-1}{2N}})$.

\subsection{Lower Bound}\label{sec: stoLower}
Next, we prove an $\widetilde{\Omega}(T^{\frac{N-1}{N}})$ lower bound for this setting. 
This not only shows that the regret bound we achieve via \pref{alg:UCB_Berstein} is tight in $T$,
but also highlights the difference and difficulty of learning with $\nsw$ compared to learning with $\nswprod$, since in the latter case, $\Theta(\sqrt{T})$ regret is minimax optimal~\citep{hossain2021fair, jones2023efficient}.

\begin{restatable}{theorem}{stoLower}\label{thm: stoLower}
In the bandit feedback setting, for any algorithm, there exists a stochastic environment in which the expected regret (with respect to $\nsw$) of this algorithm is ${\Omega}\left(\frac{(\log K)^3}{N^3}\cdot K^{\frac{1}{N}}T^{\frac{N-1}{N}}\right)$ for $N\geq \ln K$ and sufficiently large $T$.
\end{restatable}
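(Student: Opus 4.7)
The plan is to establish the lower bound through an information-theoretic construction in the spirit of Le Cam / Pinsker, with the hard-instance family specifically engineered to exploit the non-Lipschitzness of $\nsw$ near the boundary of $[0,1]^N$. This non-Lipschitzness is precisely what rules out the $\sqrt{T}$ rate that holds for $\nswprod$, so the crux is to push the instances into the regime where the gradient of $\nsw$ blows up.

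Concretely, I would consider Bernoulli utility matrices with all means close to some small value $\epsilon$ (a parameter I tune at the end). The base instance $\mathcal{E}_0$ has every $(i,n)$ entry with mean $\epsilon$; each perturbed instance $\mathcal{E}_{i^\star}$, indexed by a hidden $i^\star\in[K]$, boosts arm $i^\star$'s mean by $\delta$ for every agent (or only for one distinguished agent in a variant). In $\mathcal{E}_0$ any strategy yields $\nsw \approx \epsilon$, while in $\mathcal{E}_{i^\star}$ the strategy concentrated on $i^\star$ yields $\nsw \approx \epsilon + \delta$, so any learner that places too little mass on $i^\star$ pays per-round regret of order $\delta(1-p_{t,i^\star})$. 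Conversely, identifying $i^\star$ requires distinguishing $\mathrm{Ber}(\epsilon)$ from $\mathrm{Ber}(\epsilon+\delta)$, which by the KL divergence decomposition (e.g., Lemma~15.1 of Lattimore--Szepesv\'ari) costs $\Omega(\epsilon/\delta^2)$ samples of the correct arm. Averaging over a uniformly random $i^\star\in[K]$ and applying Pinsker's inequality then converts this into a lower bound on the expected regret.

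Balancing per-round regret against the information cost yields the $T^{(N-1)/N}$ exponent once $\epsilon$ is chosen so that the NSW gradient $\Theta(\epsilon^{-(N-1)/N})$ becomes the dominant factor --- this is essentially the mirror image of the upper-bound analysis in \pref{sec: stoUpper}, where the trade-off between the Bernstein confidence width (scaling like $\sqrt{\epsilon/n}$) and the NSW gradient produced the same exponent. The $K^{1/N}$ factor arises from spreading the $K$ candidate positions of the special arm across the learner's exploration budget, while the $(\log K)^3/N^3$ slack absorbs the union bound over instances and higher-order corrections in the Taylor expansion of $\nsw$ around the boundary regime.

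The main obstacle is designing the instance family so that the trade-off is tight on both sides: the small-utility level $\epsilon$ must be low enough for the NSW gradient to genuinely amplify the per-round regret, yet large enough that Bernoulli samples carry sufficient information for the KL bound to bite; similarly, the perturbation $\delta$ must be carefully tuned so that per-round regret and information cost balance to produce the advertised exponents. An additional subtlety is that under the assumption $N\geq \log K$ stated in the theorem, the $K^{1/N}$ factor is essentially a constant, so the argument has to extract the $T^{(N-1)/N}$ rate from the NSW-specific geometry rather than from the multiplicity of arms; keeping the $1/N^3$ and $(\log K)^3$ slack in the right form will require careful bookkeeping in the change-of-measure step.
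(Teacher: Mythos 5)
There is a genuine gap in your hard-instance family: placing \emph{all} entries at mean $\epsilon$ and boosting one arm by $\delta$ kills exactly the amplification you are counting on. In the symmetric variant (boost for every agent), the welfare of any strategy is $\nsw(p)=\epsilon+\delta p_{i^\star}$, so the per-round regret is literally $\delta(1-p_{t,i^\star})$ --- the problem collapses to a standard single-agent bandit, and the balance between gap $\delta$ and detection cost $\epsilon/\delta^2$ can never give more than $O(\sqrt{\epsilon KT})=O(\sqrt{KT})$. In the one-distinguished-agent variant the same cancellation occurs for a different reason: the other $N-1$ agents sit at utility $\epsilon$, so $\nsw(p)=(\epsilon+\delta p_{i^\star})^{1/N}\epsilon^{(N-1)/N}$, and the gradient blow-up $\Theta(\epsilon^{-(N-1)/N})$ you invoke is multiplied by the factor $\epsilon^{(N-1)/N}$ coming from those agents. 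Working out both regimes: if $\delta\le\epsilon$ the gap is $\approx\delta/N$ (locally linear regime) and indistinguishability forces $\delta\lesssim\sqrt{\epsilon K/T}$, giving at most $\sqrt{KT}/N$; if $\delta\ge\epsilon$ the gap is at most $\epsilon^{(N-1)/N}\delta^{1/N}\le\delta$ while indistinguishability forces $\delta\lesssim K/T$, giving at most $O(K)$. In neither regime does $T^{(N-1)/N}$ appear, so the "balancing" step you defer to the end cannot succeed with this family.

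The missing idea is that the instances must sit \emph{on} the boundary, not near it, and only for one agent. The paper pins agents $2,\dots,N$ at constant utility $1$ (so their geometric-mean factor is $1$, not $\epsilon^{(N-1)/N}$), sets agent $1$'s mean to exactly $0$ on all arms except a hidden arm $i$ with mean $\epsilon=\tilde\Theta(K/T)$, and uses the all-zero environment as the base measure for the change of measure. Then the per-round regret of a strategy $p_t$ is $\epsilon^{1/N}(1-p_{t,i}^{1/N})$: the $N$-th root evaluated at $0$ (not a Taylor expansion around an interior point) converts an undetectable mean difference of size $K/T$ into a per-round loss of size $(K/T)^{1/N}$, and $T\cdot(K/T)^{1/N}$ gives $K^{1/N}T^{(N-1)/N}$; the factor $1-K^{-1/N}\ge\frac{\ln K}{2N}$ (valid when $N\ge\ln K$) supplies the $(\log K)/N$ losses, and the KL cost per pull of the hidden arm is only $O(\epsilon)$, so $T$ rounds over $K$ arms do not suffice to find it. Your intuition about non-Lipschitzness is the right one, but your construction keeps the comparison in the interior where $\nsw$ is locally Lipschitz relative to its own scale, which is precisely the regime in which the $\sqrt{T}$ rate of the $\nswprod$ analyses reappears.
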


We defer the full proof to \pref{app:sto_lower} and discuss the hard instance used in the proof below.
First, the mean utility vector $u_{:,n}$ for each agent $n\geq 2$ is a constant vector $\mathbf{1}$.
This makes the problem equivalent to a one-agent problem, but with $\inner{p,u_{:,1}}^{1/N}$ as the reward, instead of  $\inner{p,u_{:,1}}$ as in standard stochastic $K$-armed bandits.

Then, for the first agent, different from the standard $K$-armed bandits, where the hardest instance is to hide one arm with a slightly better expected reward of $\frac{1}{2}+\sqrt{K/T}$ among other $K-1$ arms with expected reward of exactly $\frac{1}{2}$,\footnote{One can show that $\Theta(\sqrt{T})$ regret is possible in this environment, thus not suitable for our purpose.}
we hide one arm with expected reward $K/T$ among other $K-1$ arms with exactly $0$ reward (so overall the rewards are shifted towards $0$ but with a smaller gap between the best arm and the others).
By standard information theory arguments, within $T$ rounds the learner cannot distinguish the best arm from the others.
Therefore, the best strategy she can apply is to pick a uniform distribution over actions, suffering $\Omega((1-K^{-\frac{1}{N}})\cdot (K/T)^{\frac{1}{N}})=\widetilde{\Omega}(K^{\frac{1}{N}}T^{-\frac{1}{N}})$ regret per round and leading to $\widetilde{\Omega}(K^{\frac{1}{N}}T^{\frac{N-1}{N}})$ regret overall. 
\section{Adversarial Environments}\label{sec: adv}

Now that we have a complete answer for the stochastic setting, we move on to consider the adversarial case where each $u_t$ is chosen arbitrarily, a multi-agent generalization of the expert problem (full-information feedback)~\citep{freund1997decision} and the adversarial multi-armed bandit problem (bandit feedback)~\citep{auer2002nonstochastic}.
There are no prior studies on this problem, be it with $f=\nsw$ or $f=\nswprod$, as far as we know.

\subsection{Impossibility Results with Bandit Feedback}\label{sec: adv-lower-bound}
We start by considering the bandit feedback setting.
As mentioned in \pref{sec:pre}, even though $\nsw$ is a concave function, our problem is not an instance of Bandit Convex Optimization, since we can only observe $u_{t,i_t,:}$ instead of $\nsw(u_t^\top p_t)$ at the end of round $t$. Somewhat surprisingly, this slight difference in the feedback in fact makes a sharp separation in learnability --- while $\order(\sqrt{T})$ regret is achievable in BCO, we prove that $o(T)$ regret is impossible in our problem.

Before showing the theorem and its proof, we first give high level ideas on the construction of the hard environments. Specifically, we consider the environment with $2$ agents, $2$ arms, and binary utility matrix $u_{t}\in\{0,1\}^{2\times 2}$. Similar to the hard instance in the stochastic environment, we set $u_{t,:,2}=\mathbf{1}$, reducing the problem to a single-agent one. For the first agent, we let $u_{t,:,1}$ at each round $t$ be i.i.d. drawn from a stationary distribution over the $4$ binary utility vectors $\{(0,0),(0,1),(1,0),(1,1)\}$. Then, we construct two different distributions, $\calE$ and $\calE'$, over these $4$ binary utility vectors satisfying that: 1) the distribution of the learner's observation is identical for $\calE$ and $\calE'$; 2) the optimal strategy for $\calE$ and $\calE'$ are significantly different. The first property guarantees that no algorithm can distinguish these two environments, while the second property ensures that there is no one single strategy that can perform well in both environments. Formally, we prove the following theorem.

\begin{theorem}\label{thm:advBandit_lower}
    In the bandit feedback setting, for any algorithm, there exists an adversarial environment such that $\E[\Reg_{\adv}]=\Omega(T)$ for $f=\nsw$.
\end{theorem}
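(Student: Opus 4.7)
The plan is a standard two-environment argument exploiting observational equivalence: I will build two i.i.d.\ oblivious environments $\calE$ and $\calE'$ over $u_t\in\{0,1\}^{2\times 2}$ whose induced marginal on the learner's observation is identical (forcing every algorithm to behave the same way in both worlds) while no fixed distribution over arms can be simultaneously near-optimal in both.

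First I would freeze the second agent by setting $u_{t,:,2}=(1,1)$, so that $\nsw(u_t^\top p)=\sqrt{\langle p,u_{t,:,1}\rangle}$ and the problem reduces to learning the concave reward $\sqrt{\langle p,v_t\rangle}$ with bandit feedback on a single coordinate of $v_t=u_{t,:,1}\in\{0,1\}^2$. Parametrizing any joint distribution on $\{0,1\}^2$ by its marginals $a_1,a_2$ and $c=\Pr[v=(1,1)]$, I fix $a_1=3/4$, $a_2=1/4$, let $\calE$ take $c=0$ (support $\{(0,1),(1,0)\}$) and let $\calE'$ take $c=1/4$ (support $\{(0,0),(1,0),(1,1)\}$). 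A direct computation yields the concave per-round expected rewards $g_\calE(p)=\tfrac14\sqrt{p_2}+\tfrac34\sqrt{p_1}$ and $g_{\calE'}(p)=\tfrac12\sqrt{p_1}+\tfrac14$, whose maximizers $p^\star_\calE=(9/10,1/10)$ and $p^\star_{\calE'}=(1,0)$ are distinct.

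The next step is to show that no fixed $p$ is near-optimal for both environments. Using the elementary identity $\max_{p\in\Delta_2}(\alpha\sqrt{p_1}+\beta\sqrt{p_2})=\sqrt{\alpha^2+\beta^2}$, I compute $\max_p g_\calE=\sqrt{10}/4$, $\max_p g_{\calE'}=3/4$, and $\max_p(g_\calE+g_{\calE'})=\sqrt{26}/4+1/4$; consequently the ``sum-gap'' $\Delta := \max_p g_\calE + \max_p g_{\calE'} - \max_p(g_\calE+g_{\calE'}) = \tfrac12-\tfrac{\sqrt{26}-\sqrt{10}}{4}$ is strictly positive (since $\sqrt{26}-\sqrt{10}<2$), which is the quantitative form of ``$p^\star_\calE\neq p^\star_{\calE'}$ together with strict concavity of $\sqrt{\cdot}$.''

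To close the argument I would invoke observational equivalence: because the marginals coincide, the conditional law of the observation $u_{t,i_t,1}$ given $i_t$ is $\text{Bernoulli}(a_{i_t})$ under both environments, so by induction the joint law of $(p_1,i_1,o_1,\ldots,p_T)$ is the same under $\calE$ and $\calE'$. Setting $\bar p=\tfrac1T\sum_t\E[p_t]\in\Delta_2$, two applications of Jensen's inequality (once on each concave $g$, once on the average across rounds) give $\E_\calE[\sum_t f(u_t^\top p_t)]\le T g_\calE(\bar p)$ and analogously for $\calE'$, while the trivial bound $\E[\max_p\sum_t f(u_t^\top p)]\geq T\max_p g(p)$ handles the benchmark. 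Adding the two regret lower bounds yields $\Reg_{\adv}(\calE)+\Reg_{\adv}(\calE')\geq T\Delta$, so at least one of them is $\geq \Delta T/2 = \Omega(T)$. The main obstacle is the sum-gap step: symmetric marginals ($a_1=a_2$) would make both argmaxes collapse to the symmetric interior point and kill $\Delta$, so one must (i) choose asymmetric marginals, and (ii) exploit \emph{strict} (rather than merely weak) concavity of $\sqrt{\cdot}$ to certify $\Delta>0$.
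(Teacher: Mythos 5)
Your proposal is correct, and it shares the paper's construction template while finishing the argument by a genuinely different route. Like the paper, you freeze the second agent at $\mathbf{1}$ (so $\nsw(u_t^\top p)=\sqrt{\langle p,u_{t,:,1}\rangle}$) and design two i.i.d.\ distributions over agent-1's binary utility vector with identical per-arm marginals but different joints, so that the bandit observations are indistinguishable; your marginals $(3/4,1/4)$ with $c=0$ versus $c=1/4$ play the same role as the paper's $(q_{00},q_{01},q_{10},q_{11})$ choices, and your computations of $g_\calE$, $g_{\calE'}$, their maximizers, and the gap $\Delta=\tfrac12-\tfrac{\sqrt{26}-\sqrt{10}}{4}>0$ all check out. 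Where you diverge is the conversion of ``different optima'' into linear regret: the paper picks a threshold $\theta$ on $p_{t,1}$, shows every $p$ on one side of $\theta$ is $\Delta$-suboptimal in $\calE$ and every $p$ on the other side is $\Delta$-suboptimal in $\calE'$, and then counts rounds via the event probabilities $\alpha_\calE+\bar\alpha_{\calE'}=1$; you instead use observational equivalence to argue the trajectory law (hence $\bar p=\tfrac1T\sum_t\E[p_t]$) is identical in both worlds, apply Jensen to the concave expected-welfare maps $g_\calE,g_{\calE'}$ to bound the learner by $Tg(\bar p)$, and conclude from the sum-gap $\max g_\calE+\max g_{\calE'}-\max(g_\calE+g_{\calE'})=\Delta>0$. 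Your route is arguably cleaner (no case analysis or event counting), but it leans on concavity of the per-environment expected welfare in $p$, which holds for $\nsw$; the paper's threshold argument needs only the pointwise suboptimality gaps on a split of $\Delta_2$, which is why the same template extends in the appendix to $f=\nswprod$, whose expected welfare can be convex in $p$ and would break the Jensen step. Both arguments yield $\E[\Reg_{\adv}]=\Omega(T)$ with explicit constants.
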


\begin{proof}
    As sketched earlier, we consider two different environments with 2 agents, 2 arms, and binary utility matrices $u_t\in\{0,1\}^{2\times 2}$, $t\in[T]$. In both environments, we have $u_{t,:,2}=\mathbf{1}$. Next, we construct two different distributions from which $u_{t,:,1}$ is potentially drawn from, $\calE$ and $\calE'$, over $\{(0,0),(0,1),(1,0),(1,1)\}$. Specifically, $\calE$ is characterized by $(q_{00},q_{01},q_{10},q_{11})=(\frac{4}{10},\frac{2}{10},\frac{1}{10},\frac{3}{10})$, where $q_{xy}$ is the probability of the vector $(x,y)$ in $\calE$; $\calE'$ is characterized by $(q_{00}',q_{01}',q_{10}',q_{11}')=(\frac{3}{10},\frac{3}{10},\frac{2}{10},\frac{2}{10})$, where $q_{xy}'$ is the probability of vector $(x,y)$ in $\calE'$.
    With a slight abuse of notation, we write $u \sim \calE$ for a matrix $u \in \{0,1\}^{2\times 2}$ if $u_{:,1}$ is drawn from $\calE$ and $u_{:,2}=\mathbf{1}$; the same for $\calE'$.
    
    We argue that the learner's observations are equivalent in distribution in $\calE$ and $\calE'$, since the marginal distributions of the utility of each action are the same. Specifically,
    \begin{itemize}[leftmargin=*]
        \item When action $1$ is chosen, the distributions of the learner's observation in both $\calE$ and $\calE'$ are a Bernoulli random variable with mean $q_{10} + q_{11} = {q}_{10}' + q_{11}' = \frac{4}{10}$;
        \item When action $2$ is chosen, the distributions of the learner's observation in both $\calE$ and $\calE'$ are a Bernoulli random variable with mean $q_{01} + q_{11} = {q}_{01}' + q_{11}' = \frac{5}{10}$.
    \end{itemize}
    Direct calculation shows $p_\star = \argmax_{p\in\Delta_K}\E_{u\sim\calE}\left[\nsw(u^\top p)\right]=\left(\frac{q_{10}^2}{q_{01}^2+q_{10}^2},\frac{q_{01}^2}{q_{01}^2+q_{10}^2}\right)=(0.2, 0.8)$ and $p_\star' = \argmax_{p\in\Delta_K}\E_{u\sim\calE'}\left[\nsw(u^\top p)\right]=\left(\frac{q_{10}^{'2}}{q_{10}^{'2}+q_{01}^{'2}},\frac{q_{01}^{'2}}{q_{10}^{'2}+q_{01}^{'2}}\right)=(\frac{4}{13}, \frac{9}{13})$, 
    which are constant apart from each other.
    Pick a threshold value $\theta = \frac{33}{130} \in (0.2, \frac{4}{13})$. Direct calculation shows that for a strategy $p$ with $p_{1} \geq \theta$, we have
    $\E_{u\sim \calE}[\nsw(u^\top p_\star) - \nsw(u^\top p)] \geq \Delta$ where $\Delta = \frac{1}{500}$;
    similarly, for a strategy $p$ with $p_{1} < \theta$, we have
    $\E_{u\sim \calE'}[\nsw(u^\top p_\star) - \nsw(u^\top p)] \geq \Delta$ as well.
    Now, given an algorithm, let $\alpha_{\calE}$ be the probability that the number of rounds $p_{t,1} \geq \theta$ is larger than $\frac{T}{2}$ under environment $\calE$, and $\bar{\alpha}_{\calE'}$ be the probability of the complement of this event under environment $\calE'$. We have,
    \begin{align*}        
\E_{\calE}[\Reg_{\adv}] \geq \E_{\calE}\left[\sum_{t=1}^T\nsw(u_t^\top p_\star) - \sum_{t=1}^T\nsw(u_t^\top p_t)\right] &\geq \frac{\alpha_{\calE}T\Delta}{2},\\
        \E_{\calE'}[\Reg_{\adv}] \geq \E_{\calE'}\left[\sum_{t=1}^T\nsw(u_t^\top p'_\star) - \sum_{t=1}^T\nsw(u_t^\top p_t)\right] &\geq \frac{\bar{\alpha}_{\calE'}T\Delta}{2}.
    \end{align*}
Finally, since the feedback for the algorithm is the same in distribution in these two environments, we know $\alpha_{\calE} + \bar{\alpha}_{\calE'} = 1$, and thus    
    \begin{align*}
        \max\{\E_{\calE}[\Reg_{\adv}], \E_{\calE'}[\Reg_{\adv}]\} \geq \frac{\E_{\calE}[\Reg_{\adv}]+\E_{\calE'}[\Reg_{\adv}]}{2} \geq \frac{(\alpha_{\calE} + \bar{\alpha}_{\calE'})T\Delta}{4}=\Omega(T),
    \end{align*}
    which finishes the proof.
\end{proof}

In fact, by a similar but more involved construction (that actually requires using two agents in a non-trivial way), the same impossibility result also holds for $f = \nswprod$; see \pref{app: adv-lower-bound}.
We remark that non-linearity of $f$ in these results plays an important role in the hard instance construction, since otherwise, the optimal strategy for $\calE$ and $\calE'$ will be the same as they both induce the same marginal distributions.

\subsection{Full-Information Feedback}\label{sec: adv-full-info}

To sidestep the impossibility result due to the bandit feedback, we shift our focus to the full-information feedback model, where the learner observes the entirety of the utility matrix $u_t$ at the end of round $t$.
As mentioned, this corresponds to a multi-agent generalization of the well-known expert problem~\citep{freund1997decision}.
We propose several algorithms for this setting, showing that the richer feedback not only makes learning possible but also leads to much lower regret.

\subsubsection{FTRL with Log-Barrier Regularizer}\label{sec: log-barrier}

When $f$ is concave, our problem is in fact also an instance of the well-known Online Convex Optimization (OCO)~\citep{zinkevich2003online}.
However, standard OCO algorithms such as Online Gradient Descent, an instance of the more general Follow-the-Regularized-Leader algorithm with a $\ell_2$ regularizer, require the utility function to also be Lipschitz and thus cannot be directly applied to learning $\nsw$.
Nevertheless, we will show that using a different regularizer that induces more stability than the $\ell_2$ regularizer can resolve this issue.

More specifically, the FTRL algorithm is shown in \pref{alg:adv_ftrl}, which predicts at time $t$ the distribution $p_t = \argmin_{p\in\Delta_K} \langle 
p, -\sum_{s = 1}^{t - 1} \nabla f(u_s^\top p_s) \rangle + \frac{1}{\eta}\psi(p)$ for some learning rate $\eta$ and some strongly convex regularizer $\psi$.
Standard analysis shows that the regret of FTRL contains two terms: the regularization penalty term that is of order $1/\eta$ and the stability term that is of order $\eta\sum_t \|\nabla f(u_t^\top p_t)\|^2_{\nabla^{-2}\psi(p_t)}$ where we use the notation $\|a\|_M = \sqrt{a^\top M a}$.
To deal with the lack the Lipschitzness, that is, the potentially large $\nabla f(u_t^\top p_t)$, we need to find a regularizer $\psi$ so that the induced local norm $\|\nabla f(u_t^\top p_t)\|_{\nabla^{-2}\psi(p_t)}$ is always reasonably small despite $\nabla f(u_t^\top p_t)$ being large (in $\ell_2$ norm for example).

\begin{algorithm}[t]
\caption{FTRL for $N$-agent $K$-armed SWF maximization with full-info feedback}\label{alg:adv_ftrl}
Inputs: a SWF $f$, a learning rate $\eta >0$, and a strongly convex regularizer $\psi: \Delta_K \to\mathbb{R}$.

\For{$t=1,2,\dots,T$}{
    Play $p_t = \argmin_{p\in\Delta_K} \langle 
p, -\sum_{s = 1}^{t - 1} \nabla f(u_s^\top p_s) \rangle + \frac{1}{\eta}\psi(p)$.

    Observe $u_t$.
}
\end{algorithm}

It turns out that the log-barrier regularizer, $\psi(p) = -\sum_{i=1}^K \log p_i$, ensures such a property.
In fact, it induces a small local norm not just for $\nsw$, but also for a broad family of SWFs as long as they are concave and \textit{Pareto optimal} --- an SWF $f: [0,1]^N \rightarrow [0,1]$ is Pareto optimal if for two utility vectors $x$ and $y$ such that $x_n \geq y_n$ for all $i\in [N]$, we have $f(x) \geq f(y)$.
$\nsw$ is clearly in this family, and there are many other standard fairness measures that fall into this class; see \pref{app: concave-po}. For any SWF in this family, we prove the following regret bound, which remarkably has no dependence on the number of agents $N$ at all.

\begin{restatable}{theorem}{logBarrier}\label{thm: log-barrier}
    For any $f: [0,1]^N \rightarrow [0,1]$ that is concave and Pareto optimal, \pref{alg:adv_ftrl} with the log-barrier regularizer $\psi(p) = -\sum_{i=1}^K \log p_i$ and $\eta = \sqrt{\frac{K\log T}{T}}$ guarantees $\Reg_{\adv} = \order(\sqrt{KT\log T})$. 
\end{restatable}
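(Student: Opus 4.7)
The plan is to reduce the regret to a linear online-learning problem via concavity, and then apply the standard log-barrier FTRL analysis, with the non-Lipschitzness of $f$ absorbed by two structural inequalities coming from Pareto optimality and concavity.

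By concavity of $f$, for any comparator $p^*\in\Delta_K$,
$f(u_t^\top p^*)-f(u_t^\top p_t)\le \langle g_t,\,p^*-p_t\rangle$,
where $g_t \defeq u_t\,\nabla f(u_t^\top p_t)\in\R^K$, so it suffices to bound the linear regret $\sum_t\langle g_t,\,p^*-p_t\rangle$. Pareto optimality of $f$ (monotonicity in each coordinate) gives $\nabla f\ge 0$ coordinate-wise, which combined with $u_t\in[0,1]^{K\times N}$ yields $g_t\ge 0$. The crucial bound then follows by concavity together with $f\in[0,1]$:
$\langle p_t,\,g_t\rangle = \langle u_t^\top p_t,\,\nabla f(u_t^\top p_t)\rangle \le f(u_t^\top p_t)-f(0) \le 1$,
and since every $g_{t,i}\ge 0$, we further obtain the per-coordinate bound $p_{t,i}\,g_{t,i}\le 1$ for each $i$.

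With these in hand, the log-barrier Hessian $\nabla^2\psi(p)=\diag(1/p_i^2)$ induces the local norm $\|g\|^2_{\nabla^{-2}\psi(p)}=\sum_i p_i^2 g_i^2$. The per-coordinate bound $\eta\,p_{t,i}g_{t,i}\le 1$ (which holds for our choice of $\eta$ once $T\ge K\log T$; otherwise the claimed regret bound is trivial) is exactly the standard stability condition that lets us evaluate the local norm at $p_t$ up to a constant factor. The stability term is then
$\eta\sum_t\sum_i p_{t,i}^2 g_{t,i}^2 \le \eta\sum_t\sum_i p_{t,i}g_{t,i} \le \eta T$,
using $p_{t,i}g_{t,i}\le 1$ followed by $\langle p_t,g_t\rangle\le 1$. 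For the penalty term, $\psi$ blows up on the boundary of $\Delta_K$, so I would compare against the smoothed comparator $\hat p^*=(1-1/T)p^*+(1/T)\mathbf{1}/K$; each coordinate is at least $1/(KT)$, giving $\psi(\hat p^*)\le K\log(KT)$, while concavity plus $f\ge 0$ yields $f(u_t^\top\hat p^*)\ge (1-1/T)\,f(u_t^\top p^*)$, so smoothing costs only $O(1)$ over all $T$ rounds. Balancing $\eta T$ against $K\log T/\eta$ with $\eta=\sqrt{K\log T/T}$ yields the $\order(\sqrt{KT\log T})$ bound.

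The main obstacle is that $\nabla f$ can be arbitrarily large near the boundary of $[0,1]^N$ (e.g.\ $\nabla\nsw$ blows up whenever some coordinate of $u_t^\top p_t$ vanishes), which rules out FTRL with standard regularizers such as $\ell_2$ or negative entropy. The decisive feature of the log-barrier is that its $1/p_i^2$ Hessian contributes a $p_{t,i}^2$ factor in the local norm; that factor, together with the structural inequality $p_{t,i}g_{t,i}\le 1$ (a direct consequence of concavity plus Pareto optimality), neutralizes the blowup without invoking any uniform Lipschitz constant, which is also why the resulting bound carries no dependence on $N$.
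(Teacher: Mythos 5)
Your proposal is correct and follows essentially the same route as the paper's proof: concavity reduces the problem to linear regret in $g_t = \nabla f(u_t^\top p_t)$, Pareto optimality plus concavity give $g_t \ge 0$ and $\langle p_t, g_t\rangle \le f(u_t^\top p_t)-f(0)\le 1$ so the log-barrier local-norm stability term is at most $\eta T$, and the boundary issue is handled by smoothing/clipping the comparator at scale $\nicefrac{1}{KT}$ at an $O(1)$ cost. The only cosmetic difference is that you bound $\sum_i p_{t,i}^2 g_{t,i}^2$ via the per-coordinate bound $p_{t,i}g_{t,i}\le 1$, whereas the paper uses $\sum_i p_{t,i}^2 g_{t,i}^2 \le \langle p_t,g_t\rangle^2$; both are valid and yield the same $\order(\sqrt{KT\log T})$ bound.
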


\begin{proof}[Proof Sketch]
Using the concrete form of $\psi$, it is clear that the local norm $\|\nabla f(u_t^\top p_t)\|^2_{\nabla^{-2}\psi(p_t)}$ simplifies to $\sum_{i=1}^K p_{t,i}^2 [\nabla f(u_t^\top p_t)]_i^2 \leq \inner{p_t, \nabla f(u_t^\top p_t)}^2$, where the inequality is due to $[\nabla f(u_t^\top p_t)]_i \geq 0$ implied by Pareto optimality.
Furthermore, by concavity, we have $\inner{p_t, \nabla f(u_t^\top p_t)} \leq f(p_t) - f(0) \leq 1$, and thus the local norm at most $1$.
The rest of the proof is by direct calculation.
\end{proof}

\subsubsection{FTRL with Tsallis Entropy Regularizer}\label{sec: tsallis-inf}
In fact, when $f=\nsw$, using the special structure of the welfare function, we find yet another regularizer that ensures a small $\order(N)$ local norm, with the benefit of having smaller dependence on $K$ for the penalty term.

\begin{restatable}{theorem}{tsallisInf}\label{thm: tsallis-inf}
    For $f=\nsw$, \pref{alg:adv_ftrl} with the Tsallis entropy regularizer $\psi(p) = \frac{1-\sum_{i=1}^K p_i^\beta}{1-\beta}$, $\beta = \frac{2}{N}$,  and the optimal choice of $\eta$ guarantees $\Reg_{\adv} = \otil(K^{\frac{1}{2} - \frac{1}{N}} \sqrt{N T})$.
\end{restatable}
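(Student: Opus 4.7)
The overall plan is to run a standard Follow-the-Regularized-Leader (FTRL) stability analysis, where the per-round local-norm bound on the $\nsw$-gradient leverages the specific match between the Tsallis exponent $\beta = 2/N$ and the $1/N$-homogeneity of each factor $V_n^{1/N}$ in $\nsw$. First, by concavity of $\nsw$, I would linearize to get $\Reg_{\adv} \leq \sum_{t=1}^T \langle g_t, p^* - p_t\rangle$ with $g_t := \nabla_p \nsw(u_t^\top p_t)$, and then invoke the textbook FTRL regret inequality
\begin{equation*}
\Reg_{\adv} \leq \frac{\psi(p^*) - \psi(p_1)}{\eta} + \eta\sum_{t=1}^T \|g_t\|_{\nabla^{-2}\psi(p_t)}^2.
\end{equation*}

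For the penalty term, since $\psi(p) = (1-\sum_i p_i^\beta)/(1-\beta)$ and $p_1$ can be taken uniform, a direct calculation gives $\psi(p^*) - \psi(p_1) \leq (K^{1-2/N}-1)/(1-2/N) = O(K^{1-2/N})$, where the factor $N/(N-2)$ that appears is $O(1)$ for $N$ away from $2$ (the small-$N$ regime is already covered by the log-barrier bound of \pref{thm: log-barrier}). For the stability term, the Hessian of $\psi$ is diagonal with entries $\beta p_{t,i}^{\beta-2}$, hence $\|g_t\|_{\nabla^{-2}\psi(p_t)}^2 = \frac{N}{2}\sum_{i=1}^K p_{t,i}^{2-2/N}g_{t,i}^2$, and the goal reduces to showing $\sum_i p_{t,i}^{2-2/N}g_{t,i}^2 = O(1)$ per round so that the total stability contributes $O(\eta NT)$.

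The main obstacle is the non-Lipschitzness of $\nsw$: writing $V_{n,t} := (u_t^\top p_t)_n$, the entry $g_{t,i} = \frac{\nsw(u_t^\top p_t)}{N}\sum_{n=1}^N u_{t,i,n}/V_{n,t}$ can be arbitrarily large when some $V_{n,t}$ is close to $0$, so uniform coordinate-wise bounds fail. The plan is to combine three ingredients: (i) Pareto-optimality of $\nsw$ gives $g_{t,i} \geq 0$; (ii) Euler's identity for the $1$-homogeneous $\nsw$ gives $\langle p_t, g_t\rangle = \nsw(u_t^\top p_t) \leq 1$; and (iii) H\"older's inequality with exponents $N/(N-1)$ and $N$ applied to sums of the form $\sum_i p_{t,i}^{(N-1)/N}u_{t,i,n}$ yields a bound involving $V_{n,t}^{(N-1)/N}$, after which the potentially singular negative powers of $V_{n,t}$ cancel when summed over $n$ through the homogeneity identity $\sum_n \nsw(u_t^\top p_t)/V_{n,t}^{1/N} = \sum_n \prod_{n'\neq n}V_{n',t}^{1/N} \leq N$. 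The choice $\beta = 2/N$ is critical: it aligns the Hessian exponent $2-2/N$ with the H\"older exponents and with the $1/N$-homogeneity of each $V_n^{1/N}$, so no uncontrolled negative power of $V_n$ is left over. Making this cancellation rigorous is the hardest step; the degenerate case in which some $V_{n,t}$ is essentially zero can be handled by restricting the FTRL iterates to a clipped simplex $\Delta_{K,\delta}$ with $\delta = 1/\poly(T)$, which introduces only an $O(\log T)$ overhead and accounts for the $\tilde O$ in the theorem statement.

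Finally, balancing the penalty $O(K^{1-2/N}/\eta)$ against the total stability $O(\eta NT)$ via the optimal $\eta = \Theta(\sqrt{K^{1-2/N}/(NT)})$ yields the claimed $\tilde O(K^{1/2-1/N}\sqrt{NT})$ regret.
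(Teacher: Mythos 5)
Your skeleton matches the paper's (linearize via concavity, apply the FTRL bound with penalty $\order(K^{1-2/N}/\eta)$ and stability $\frac{\eta}{\beta}\sum_i p_{t,i}^{2-\beta}[\nabla f(u_t^\top p_t)]_i^2$, then balance $\eta$), but the decisive step --- a per-round bound $\sum_i p_{t,i}^{2-2/N}[\nabla f(u_t^\top p_t)]_i^2 = \order(1)$ --- is precisely what you leave open, and the ingredients you list do not deliver it. Writing $g_t=\nabla f(u_t^\top p_t)$ and $V_{n,t}=\inner{p_t,u_{t,:,n}}$: Euler's identity $\inner{p_t, g_t}=\nsw(u_t^\top p_t)\le 1$ only controls $\sum_i p_{t,i}^2 g_{t,i}^2$ (that is the log-barrier argument), not the Tsallis local norm, which is larger by the factor $p_{t,i}^{-2/N}$. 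And your \Holder step, $\sum_i p_{t,i}^{1-1/N}u_{t,i,n}=\sum_i (p_{t,i}u_{t,i,n})^{1-1/N}u_{t,i,n}^{1/N}\le V_{n,t}^{1-1/N}K^{1/N}$, unavoidably carries a $K^{1/N}$ factor; pushing it through the stability term gives a per-round bound of order $K^{2/N}$ (or $K^{1/N}$ with a max-times-sum refinement), and after optimizing $\eta$ you land at best at $K^{\frac12-\frac{1}{2N}}\sqrt{NT}$, short of the claimed $K^{\frac12-\frac1N}\sqrt{NT}$. The paper's cancellation is more elementary and exact: Cauchy--Schwarz over the $N$ agents gives $\sum_i p_{t,i}^{2-2/N}g_{t,i}^2\le \frac1N\sum_n V_{n,t}^{-(2-2/N)}\sum_i p_{t,i}^{2-2/N}u_{t,i,n}^2$, and then $u_{t,i,n}^2\le u_{t,i,n}^{2-2/N}$ (as $u\in[0,1]$) together with $\sum_i (p_{t,i}u_{t,i,n})^{2-2/N}\le \left(\sum_i p_{t,i}u_{t,i,n}\right)^{2-2/N}$ (superadditivity of $x\mapsto x^\alpha$ for $\alpha=2-2/N\ge1$) makes each agent's ratio at most $1$, so the whole sum is at most $1$ per round, with no condition whatsoever on $V_{n,t}$.

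Two further problems. Your proposed fix for the degenerate case does not work: clipping to $\Delta_{K,\delta}$ lower-bounds $p_{t,i}$ but not $V_{n,t}=\inner{p_t,u_{t,:,n}}$, which can still be arbitrarily small (or zero) when the utilities $u_{t,:,n}$ themselves are tiny; it is also unnecessary, since the exponent-matching bound above is scale-free in $V_{n,t}$ (and if $u_{t,:,n}=\mathbf{0}$ one can take the zero supergradient), which is also why the $N\ge3$ bound in the paper carries no $\log T$ factor at all --- the $\otil$ is not paying for any clipping. Finally, deferring $N=2$ to \pref{thm: log-barrier} does not prove the stated theorem: for $N=2$ the claim is $\otil(\sqrt{T})$ for FTRL with $\beta=1$ (i.e., Shannon entropy), whereas the log-barrier result concerns a different regularizer and only gives $\order(\sqrt{KT\log T})$; the paper handles $N=2$ by a separate Hedge-style analysis showing $\order(\sqrt{T\log K})$.
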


The proof is more involved and is deferred to \pref{app: tsallis-inf}.
While the regret in \pref{thm: tsallis-inf} suffers polynomial dependence on $N$, it has better dependence on $K$ compared to \pref{thm: log-barrier}, and is thus more preferable when $K$ is much larger than $N$.

\subsubsection{Logarithmic Regret for a Special Case}\label{sec: exp-concave}

Finally, we discuss a special case with $f=\nsw$ where logarithmic regret is possible.
This is based on a simple observation that when there is one agent who is indifferent about the learner's choice (that is, the agent's utility is the same for all arms for this round), then $-\nsw$ is not only convex, but also exp-concave, a stronger curvature property.
Therefore, by applying known results, specifically the EWOO algorithm~\citep{hazan2007logarithmic}, we achieve the following result.

\begin{restatable}{theorem}{expConcave}\label{thm: exp-concave}
    Fix $f = \nsw$. Suppose that for each time $t$, there is a set of agents $A_t \subseteq [N]$ such that $|A_t| \ge M$ and $u_{t,:,n} = c_{t,n}\mathbf{1}$ with $c_{t,n}\geq 0$ for each agent $n\in A_t$. Then the EWOO algorithm guarantees $\Reg_{\adv} = \order\left(\frac{N - M}{M} \cdot K \log T\right)$.
\end{restatable}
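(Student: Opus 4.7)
}

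My plan is to show that under the given structural assumption, the (negative) loss $\ell_t(p) = -\nsw(u_t^\top p)$ is $\alpha$-exp-concave over $\Delta_K$ with $\alpha = \tfrac{M}{N-M}$, and then invoke the known regret guarantee of EWOO~\citep{hazan2007logarithmic} for $\alpha$-exp-concave losses on a convex body of dimension $K$, which is $O(\tfrac{K}{\alpha}\log T)$.

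\textbf{Step 1: factorizing $\nsw$ using the indifferent agents.} Let $B_t = [N]\setminus A_t$, so $|B_t| \le N-M$. Writing $v_n = u_{t,:,n}$ and using $v_n^\top p = c_{t,n}$ for $n\in A_t$, I would decompose
\begin{align*}
\nsw(u_t^\top p) \;=\; \Big(\prod_{n\in A_t} c_{t,n}^{1/N}\Big)\cdot \prod_{n\in B_t}(v_n^\top p)^{1/N} \;=\; C_t\cdot \phi_t(p),
\end{align*}
where $C_t$ is a constant in $p$ and $\phi_t(p) = \exp(\psi_t(p))$ with $\psi_t(p) = \tfrac{1}{N}\sum_{n\in B_t}\log(v_n^\top p)$.

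\textbf{Step 2: exp-concavity via log-derivative and Cauchy--Schwarz.} This is the main technical step. Direct computation gives
\begin{align*}
\nabla\psi_t = \tfrac{1}{N}\sum_{n\in B_t}\tfrac{v_n}{v_n^\top p}, \qquad -\nabla^2\psi_t = \tfrac{1}{N}\sum_{n\in B_t}\tfrac{v_n v_n^\top}{(v_n^\top p)^2}.
\end{align*}
Applying the matrix Cauchy--Schwarz-type inequality $\big(\sum_{n\in B_t} a_n\big)\big(\sum_{n\in B_t} a_n\big)^\top \preceq |B_t|\sum_{n\in B_t} a_n a_n^\top$ with $a_n = v_n/(v_n^\top p)$ gives $\nabla\psi_t\nabla\psi_t^\top \preceq \tfrac{|B_t|}{N}(-\nabla^2\psi_t)$, i.e.\ $-\nabla^2\psi_t \succeq \tfrac{N}{|B_t|}\nabla\psi_t\nabla\psi_t^\top$. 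Since $\nabla\nsw = \nsw\cdot \nabla\psi_t$ and $\nabla^2\nsw = \nsw\cdot(\nabla\psi_t\nabla\psi_t^\top + \nabla^2\psi_t)$, I obtain
\begin{align*}
-\nabla^2\nsw \;=\; \nsw\cdot\big(-\nabla^2\psi_t - \nabla\psi_t\nabla\psi_t^\top\big) \;\succeq\; \nsw\cdot\tfrac{N-|B_t|}{|B_t|}\,\nabla\psi_t\nabla\psi_t^\top \;=\; \tfrac{|A_t|}{|B_t|\cdot\nsw}\cdot\nabla\nsw(\nabla\nsw)^\top.
\end{align*}
Using $\nsw(u_t^\top p)\le 1$ and $|A_t|/|B_t| \ge M/(N-M)$ (since $x\mapsto x/(N-x)$ is increasing), this yields $\nabla^2\ell_t \succeq \tfrac{M}{N-M}\,\nabla\ell_t(\nabla\ell_t)^\top$, i.e.\ $\ell_t$ is $\tfrac{M}{N-M}$-exp-concave on $\Delta_K$. (In the edge case $M=N$, $\ell_t$ is constant in $p$ and the claimed bound is trivially $0$.)

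\textbf{Step 3: invoking EWOO.} Since $\Delta_K$ is convex with dimension $K$ and $\ell_t\in[-1,0]$ is $\alpha$-exp-concave with $\alpha = \tfrac{M}{N-M}$ uniformly in $t$, the standard EWOO guarantee of~\citet{hazan2007logarithmic} yields $\Reg_{\adv} = O\!\left(\tfrac{K}{\alpha}\log T\right) = O\!\left(\tfrac{N-M}{M}\cdot K\log T\right)$, completing the proof.

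The main obstacle is Step~2: verifying the exp-concavity constant with the correct $M/(N-M)$ scaling. The key insight is the Cauchy--Schwarz step, where the factor $|B_t|$ (rather than $N$) appears on the right-hand side precisely because only the $|B_t|$ non-indifferent agents contribute to the sum; this is what converts the improvement from $M$ indifferent agents into a larger exp-concavity parameter. Everything else is a mechanical application of the log-derivative trick and the EWOO black box.
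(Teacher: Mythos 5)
Your proposal is correct and follows essentially the same route as the paper: establish that $-\nsw(u_t^\top p)$ is $\tfrac{M}{N-M}$-exp-concave by applying the matrix Cauchy--Schwarz inequality over only the $|B_t|\le N-M$ non-indifferent agents, then invoke the EWOO guarantee for exp-concave losses. The only difference is presentational --- you organize the Hessian computation via the log-derivative factorization $\nsw = C_t e^{\psi_t}$ rather than differentiating $-\nsw$ directly as the paper does, which yields the same inequality chain.
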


The proof, which verifies the exp-concavity of $-\nsw$ in this special case, can be found in \pref{app:exp-concave}.
We note that the reason that we apply EWOO instead of Online Newton Step, another algorithm discussed in~\citep{hazan2007logarithmic} for exp-concave losses, is that the latter requires Lipschizness (which, again, is not satisfied by $\nsw$).

\section{Conclusion}
In this work, motivated by recent research on social welfare maximization for the problem of multi-agent multi-armed bandits, we consider a variant with the arguably more natural version of Nash social welfare as the objective function, and develop multiple algorithms and regret upper/lower bounds in different settings (stochastic versus adversarial and full-information versus bandit feedback).
Our results show a sharp separation between our problem and previous settings, including the heavily studied Bandit Convex Optimization problem. 

There are many interesting future directions.
First, in the stochastic bandit setting, we have only shown the tight dependence on $T$, so what about $K$ and $N$?  
Second, is there a more general strategy/analysis that works for different social welfare functions (similar to our result in \pref{thm: log-barrier})?
Taking one step further, similar to the recent research on ``omniprediction''~\citep{gopalan_et_al}, is there one single algorithm that works for a class of social welfare functions simultaneously?

\bibliography{ref}
\bibliographystyle{plainnat}

\newpage

\appendix
\section{Omitted Details in \pref{sec: sto}}\label{app:sto}
In this section, we provide the omitted proofs for the results in \pref{sec: sto}. In \pref{app:sto_upper}, we provide the proof for \pref{thm: stoUpper} and in \pref{app:sto_lower}, we provide the proof for \pref{thm: stoLower}.
\subsection{Proof of \pref{thm: stoUpper}}\label{app:sto_upper}
To prove \pref{thm: stoUpper}, we first consider the following two events.
\begin{event}\label{event:good}
    For all $t\in\{KN_0+1,\dots,T\}$ and $i\in[K]$, 
    \begin{align*}
        N_{t,i}\geq N_0+\frac{1}{2}\sum_{\tau=KN_0}^tp_{\tau,i}-18\log(KT),
    \end{align*}
    where $N_{t,i}$'s and $p_{t,i}$'s are defined in \pref{alg:UCB_Berstein}.
\end{event}
\begin{event}\label{event:concentration}
    For all $t\in \{KN_0+1,\dots,T\}$, $i\in[K]$, and $n\in[N]$,
    \begin{align*}
        u_{i,n}\leq \wh{u}_{t,i,n}\leq u_{i,n}+8\sqrt{\frac{u_{i,n}\log (NKT^2)}{N_{t,i}}}+\frac{15\log (NKT^2)}{N_{t,i}},    
    \end{align*}
    where $N_{t,i}$'s are defined in \pref{alg:UCB_Berstein}.
\end{event}
As we prove in \pref{lem:high_prob} and \pref{lem:high_prob_utility}, \pref{event:good} and \pref{event:concentration} hold with probability at least $1-\frac{1}{T}$.
Now we prove \pref{thm: stoUpper}. For convenience, we restate the theorem as follows.

\stoUpper*
\begin{proof}
Let $p^\star=\argmax_{p\in\Delta_K}\nsw(u^\top p)$.
According to a standard regret decomposition for UCB-type algorithms, we know that $\Reg_{\sto}$ can be upper bounded as follows:
\begin{align*}
    &\E\left[\Reg_{\sto}\right] \\
    &= \E\left[\sum_{t=1}^T\left(\nsw(u^\top p^\star) - \nsw(u^\top p_t)\right)\right] \\
    &= \E\left[\sum_{t=1}^T\left(\nsw(u^\top p^\star) - \nsw(u^\top p_t)\right)~\Bigg\vert~\text{\pref{event:good} and \pref{event:concentration}} \text{ hold}\right] + 2\tag{according to \pref{lem:high_prob} and \pref{lem:high_prob_utility}}\\
    &\leq \E\left[\sum_{t=KN_0+1}^T\left(\nsw(u^\top p^\star) - \nsw(\wh{u}_t^\top p^\star)\right)~\Bigg\vert~\text{\pref{event:good} and \pref{event:concentration}} \text{ hold}\right] +KN_0+2\\
    &\qquad + \E\left[\sum_{t=KN_0+1}^T\left(\nsw(\wh{u}_t^\top p^\star) - \nsw(\wh{u}_t^\top p_t)\right)~\Bigg\vert~\text{\pref{event:good} and \pref{event:concentration}} \text{ hold}\right]\\
    &\qquad + \E\left[\sum_{t=KN_0+1}^T\left(\nsw(\wh{u}_t^\top p_t) - \nsw(u^\top p_t)\right)~\Bigg\vert~\text{\pref{event:good} and \pref{event:concentration}} \text{ hold}\right] \\
    &\leq \E\left[\sum_{t=1}^T\left(\nsw(\wh{u}_t^\top p^\star) - \nsw(\wh{u}_t^\top p_t)\right)\right] + \E\left[\sum_{t=1}^T\left(\nsw(\wh{u}_t^\top p_t) - \nsw(u^\top p_t)\right)\right] +KN_0+2 \tag{based on \pref{event:concentration}}\\
    &\leq \E\left[\sum_{t=KN_0+1}^T\left(\nsw(\wh{u}_t^\top p_t) - \nsw(u^\top p_t)\right)~\Bigg\vert~\text{\pref{event:good} and \pref{event:concentration}} \text{ hold}\right] + KN_0+2. \tag{based on the definition of $p_t$}\\
\end{align*}
In the following, we bound the first term $$\E\left[\sum_{t=KN_0 + 1}^T\left(\nsw(\wh{u}_t^\top p_t) - \nsw(u^\top p_t)\right)~\Big\vert~\text{\pref{event:good} and \pref{event:concentration}} \text{ hold}\right].$$ As discussed in \pref{sec: stoUpper}, we consider two cases. First, consider the set of rounds $\calT_\sigma$ such that for all $t\in\calT_\sigma$ there exists at least one $n\in[N]$ such that $\inner{p_t,u_{:,n}}\leq \sigma$ for some $\sigma$ that we will specify later. Denote such $n$ to be $n_t$ (if there are multiple such $n$'s, we pick an arbitrary one). According to \pref{event:concentration}, we know that for all $i\in[K]$,
\begin{align*}
    u_{i,n_t} \leq \wh{u}_{t,i,n_t} \leq u_{i,n_t}+8\sqrt{\frac{u_{i,n_t}\log (NKT^2)}{N_{t,i}}}+\frac{15\log (NKT^2)}{N_{t,i}}\leq 2{u}_{i,n_t} + \order\left(\frac{\log (NKT^2)}{N_{t,i}}\right),
\end{align*}
where the last inequality is because of AM-GM inequality.
Therefore, we know that $$\inner{p_t,\wh{u}_{t,:,n_t}}\leq 2\inner{p_t,u_{:,n_t}} +  \otil\left(\sum_{j=1}^K\frac{p_{t,j}}{N_{t,j}}\right)\leq 2\sigma+\otil\left(\sum_{j=1}^K\frac{p_{t,j}}{N_{t,j}}\right).$$ 
Now consider $\sum_{t\in \calT_{\sigma}}\left(\nsw(\wh{u}_t^\top p_t) - \nsw(u^\top p_t)\right)$. Direct calculation shows that
\begin{align*}
    &\sum_{t\in \calT_{\sigma}}\left(\nsw(\wh{u}_t^\top p_t) - \nsw(u^\top p_t)\right) \\
    &\leq \sum_{t\in\calT_\sigma}\nsw(\wh{u}_t^\top p_t) \tag{since $\nsw(u^\top p_t)\geq 0$}\\
    &\leq \sum_{t\in\calT_\sigma}\inner{p_t, \wh{u}_{t,:,n_t}}^{\frac{1}{N}} \\
    &\leq 2|\calT_\sigma|\cdot \sigma^{\frac{1}{N}} + \sum_{t=1}^T\otil\left(\left(\sum_{j=1}^K\frac{p_{t,j}}{N_{t,j}}\right)^{\frac{1}{N}}\right) \tag{since $(a+b)^{\frac{1}{N}}\leq a^{\frac{1}{N}}+b^{\frac{1}{N}}$}\\
    &\leq 2|T_\sigma| \cdot \sigma^{\frac{1}{N}} + T^{\frac{N-1}{N}}\otil\left(\left(\sum_{t=1}^T\sum_{j=1}^K\frac{p_{t,j}}{N_{t,j}}\right)^{\frac{1}{N}}\right) \tag{H\"older's inequality} \\
    &\leq 2T\cdot \sigma^{\frac{1}{N}} + \otil\left(K^{\frac{1}{N}}\cdot T^{\frac{N-1}{N}}\right). \tag{using \pref{lem:pOverN}}
\end{align*}

Now consider the regret within $t\in \{NK_0 + 1,\dots ,T\}\backslash\calT_\sigma$, in which we have $\inner{p_t,u_{:,n}}\geq \sigma$ for all $n\in[N]$. In this case, we bound $\sum_{t\notin \calT_{\sigma}}\left(\nsw(\wh{u}_t^\top p_t) - \nsw(u^\top p_t)\right)$ as follows: 

\begin{align*}
    &\sum_{t\notin \calT_{\sigma}}\left(\nsw(\wh{u}_t^\top p_t) - \nsw(u^\top p_t)\right)\\
    &\leq \sum_{t\notin \calT_\sigma}\sum_{n\in [N]}\left[\inner{p_t,\wh{u}_{t,:,n}}^{\frac{1}{N}} - \inner{p_t,u_{:,n}}^{\frac{1}{N}}\right] \tag{using \pref{lem:algebra_prod} and \pref{event:concentration}}\\
    &=\sum_{t\notin\calT_\sigma}\sum_{n\in[N]} \frac{\inner{p_t,\wh{u}_{t,:,n}-u_{:,n}}}{\sum_{k=0}^{N-1}\inner{p_t,\wh{u}_{t,:,n}}^\frac{k}{N}\inner{p_t,u_{:,n}}^{\frac{N-1-k}{N}}} \\
    &\leq \sum_{t\notin\calT_\sigma}\sum_{n\in[N]} \frac{\inner{p_t,\wh{u}_{t,:,n}-u_{:,n}}}{N\inner{p_t,u_{:,n}}^{\frac{N-1}{N}}} \tag{since $\wh{u}_{t,i,n}\geq u_{i,n}$ for all $i,t,n$ based on  \pref{event:concentration}} \\
    &\leq \sum_{t\notin\calT_\sigma}\sum_{n\in[N]} \frac{\sum_{j=1}^Kp_{t,j}\left(8\sqrt{\frac{u_{n,j}\log (NKT^2)}{N_{t,j}}}+\frac{8\log (NKT^2)}{N_{t,j}}\right)}{N\inner{p_t,u_{:,n}}^{\frac{N-1}{N}}} \tag{using  \pref{event:concentration}}\\
    &\leq \sum_{t\notin\calT_\sigma}\sum_{n\in[N]} \left(\frac{\sum_{j=1}^K8\sqrt{\frac{p_{t,j}\log(NKT^2)}{N_{t,j}}}}{N\inner{p_t,u_{:,n}}^{\frac{N-1}{N}-\frac{1}{2}}}+\frac{\sum_{j=1}^K\frac{8p_{t,j}\log(NKT^2)}{N_{t,j}}}{N\inner{p_t,u_{:,n}}^{\frac{N-1}{N}}}\right) \tag{since $\sqrt{\inner{p_t,u_{:,n}}}\geq \sqrt{p_{t,i}u_{i,n}}$ for all $i\in[K]$}\\
    &\leq \otil\left(\frac{1}{N\sigma^{\frac{N-1}{N}-\frac{1}{2}}}\sum_{t\in T}\sum_{n\in [N]}\sum_{j=1}^K\sqrt{\frac{p_{t,j}}{N_{t,j}}} + \frac{1}{N\sigma^{\frac{N-1}{N}}}\sum_{t=1}^T\sum_{n\in [N]}\sum_{j=1}^K\frac{p_{t,j}}{N_{t,j}}\right)\\
    &\leq \otil\left( \sigma^{\frac{1}{2}-\frac{N-1}{N}}K\sqrt{T} + K\cdot\sigma^{-\frac{N-1}{N}}\right),
\end{align*}
where the last inequality is because \pref{lem:pOverN}. Combining the regret for both parts, we know that
\begin{align*}
    \E[\Reg_{\sto}]\leq \otil\left(K^{\frac{1}{N}}T^{\frac{N-1}{N}} + T\cdot \sigma^\frac{1}{N} + \sigma^{\frac{1}{2}-\frac{N-1}{N}}K\sqrt{T} + K\sigma^{-\frac{N-1}{N}}+K\right).
\end{align*}
Picking the optimal $\sigma$ leads to the expected regret bounded by $\E[\Reg_{\sto}]\leq\otil\left(K^{\frac{2}{N}}T^{\frac{N-1}{N}}+K\right)$.
\end{proof}

\begin{lemma}\label{lem:high_prob}
    \pref{event:good} happens with probability at least $1-\frac{1}{T}$.
\end{lemma}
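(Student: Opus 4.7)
The plan is to reduce the event to a martingale concentration inequality for each arm, followed by a union bound. Fix $i\in[K]$ and $t\in\{KN_0+1,\dots,T\}$. Because the warm-up phase pulls each arm exactly $N_0$ times, the counter decomposes as $N_{t,i}=N_0+\sum_{\tau=KN_0+1}^{t-1}\mathbbm{1}\{i_\tau=i\}$. Let $X_\tau:=\mathbbm{1}\{i_\tau=i\}$, which, conditioned on the history through round $\tau-1$, is a Bernoulli random variable with mean $p_{\tau,i}$. Then $Z_\tau:=p_{\tau,i}-X_\tau$ is a martingale difference sequence with $|Z_\tau|\le 1$ and conditional variance at most $p_{\tau,i}(1-p_{\tau,i})\le p_{\tau,i}$.

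I would then invoke a Freedman/Bernstein-type martingale inequality: for any $\delta\in(0,1)$, with probability at least $1-\delta$,
\[
\sum_{\tau=KN_0+1}^{t-1} Z_\tau \;\le\; \sqrt{2\Big(\textstyle\sum_{\tau}p_{\tau,i}\Big)\log(1/\delta)}+2\log(1/\delta),
\]
and convert the square-root term into a multiplicative contribution via AM-GM ($\sqrt{2ab}\le a+b/2$), giving a bound of the form $\tfrac{1}{2}\sum_{\tau}p_{\tau,i}+c\log(1/\delta)$ for an absolute constant $c$. Rearranging yields $N_{t,i}\ge N_0+\tfrac{1}{2}\sum_{\tau=KN_0+1}^{t-1}p_{\tau,i}-c\log(1/\delta)$. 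Setting $\delta=1/(KT^2)$ and union bounding over all $i\in[K]$ and $t\in\{KN_0+1,\dots,T\}$ gives the simultaneous bound with failure probability at most $KT\cdot 1/(KT^2)=1/T$.

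The only remaining reconciliation is that the event in the statement uses the range $\{KN_0,\dots,t\}$, which extends the martingale range $\{KN_0+1,\dots,t-1\}$ by two boundary terms; each $p_{\tau,i}\in[0,1]$ contributes at most $1$ to the sum, so after the factor $\tfrac{1}{2}$ they contribute at most $1$ to the lower bound and are absorbed into the additive slack. Tracking constants carefully yields the stated $18\log(KT)$.

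The main obstacle is the choice of concentration tool: a plain Hoeffding bound would give a $\widetilde O(\sqrt{T})$ additive deviation, which is far too weak when $\sum_{\tau}p_{\tau,i}$ is itself only of order $\log T$, and would therefore fail to lower bound $N_{t,i}$ by a constant fraction of the expected number of pulls. The variance control in Freedman/Bernstein is precisely what produces the ``half the expected pulls minus logarithmic'' form, which is in turn what makes the Bernstein-type confidence width in \pref{eqn:alg_bern} behave correctly even for arms that have been pulled only few times.
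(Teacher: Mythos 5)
Your proposal is correct and follows essentially the same route as the paper: apply Freedman's inequality to the martingale difference $p_{\tau,i}-\mathbbm{1}\{i_\tau=i\}$ with variance bounded by $\sum_\tau p_{\tau,i}$, absorb the square-root deviation into half the expected pull count via AM--GM, and union bound over $i\in[K]$ and $t$ with $\delta=1/(KT^2)$, noting that $9\log(KT^2)\le 18\log(KT)$ matches the event's slack. The only differences are cosmetic (your explicit handling of the boundary indices and slightly different Freedman constants), so no further changes are needed.
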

\begin{proof}
    According to \pref{alg:UCB_Berstein}, we know that $N_{(KN_0 + 1),i}=N_0$ for each $i\in [K]$. Consider the case when $t\geq KN_0 + 1$. According to Freedman's inequality (\pref{lem:Freedman}), we have with probability at least $1-\delta$, for a fixed $t\geq KN_0 + 1$, 
    \begin{align*}
        \sum_{\tau=KN_0 + 1}^t\mathbbm{1}\{i_\tau=i\} &\geq \sum_{\tau=KN_0 + 1}^tp_{\tau,i} - 2\sqrt{\sum_{\tau=KN_0}^tp_{\tau,i}\log(1/\delta)} - \log(1/\delta) \\
        &\geq \frac{1}{2}\sum_{\tau=KN_0 + 1}^tp_{\tau,i} - 9\log(1/\delta).
    \end{align*}
    Therefore, we know that with probability at least $1-\delta$, for a fixed $t\geq KN_0 + 1$,
    \begin{align*}
        N_{t,i}= N_0+\sum_{\tau=KN_0 + 1}^t\mathbbm{1}\{i_\tau=i\} \geq N_0+\frac{1}{2}\sum_{\tau=KN_0 + 1}^tp_{\tau,i} - 9\log(1/\delta).
    \end{align*}
    Picking $\delta=\frac{1}{KT^2}$ and taking a union bound over all $i\in [K]$ and $KN_0+1\leq t\leq T$ reach the result.
\end{proof}

\begin{lemma}\label{lem:high_prob_utility}
\pref{event:concentration} happens with probability at least $1-\frac{1}{T}$.
\end{lemma}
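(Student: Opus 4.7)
The plan is to apply Bernstein's inequality to the per-arm utility samples and then algebraically convert the resulting two-sided deviation into the asymmetric upper/lower sandwich for $\wh{u}_{t,i,n}$ that the lemma demands.

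\textbf{Step 1 (Bernstein + union bound).} First I would argue that the samples can be treated as i.i.d. Since $u_\tau$ is drawn i.i.d.\ from the stochastic distribution and is independent of the learner's action $i_\tau$ given the past, the subsequence $\{u_{\tau,i,n} : i_\tau = i\}$ is i.i.d.\ with mean $u_{i,n}$ and variance at most $u_{i,n}$ (because the variables lie in $[0,1]$). Then, for any fixed integer $m \in [T]$, two-sided Bernstein's inequality gives that with probability at least $1 - 2\delta$,
$$\Bigl|\tfrac{1}{m}\sum_{j=1}^m Y_j - u_{i,n}\Bigr| \le \sqrt{\tfrac{2 u_{i,n}\log(1/\delta)}{m}} + \tfrac{2\log(1/\delta)}{3m}.$$
Choosing $\delta = \tfrac{1}{2NKT^2}$ and union-bounding over $m\in[T]$, $i\in[K]$, $n\in[N]$, this deviation bound holds simultaneously for all such triples with probability at least $1 - 1/T$, and in particular for the realized $m = N_{t,i}$, where it controls $\bar{u}_{t-1,i,n}$.

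\textbf{Step 2 (Sandwich on $\wh{u}_{t,i,n}$).} Writing $L = \log(NKT^2)$ and $n = N_{t,i}$, on this high-probability event I would derive both bounds on $\wh{u}_{t,i,n} = \bar{u}_{t-1,i,n} + 4\sqrt{\bar{u}_{t-1,i,n} L/n} + 8L/n$ via AM-GM. For the lower bound $\wh{u}_{t,i,n} \ge u_{i,n}$, the Bernstein control of $u_{i,n} - \bar{u}_{t-1,i,n}$ combined with $\sqrt{2 u L/n} \le u/2 + L/n$ yields $u_{i,n} \le 2\bar{u}_{t-1,i,n} + O(L/n)$; solving the implied quadratic in $\sqrt{u_{i,n}}$ then gives $\sqrt{u_{i,n}} \le \sqrt{\bar{u}_{t-1,i,n}} + O(\sqrt{L/n})$, which after being multiplied by $\sqrt{L/n}$ shows that the exploration bonus $4\sqrt{\bar{u}_{t-1,i,n} L/n} + 8L/n$ dominates the Bernstein slack. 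For the upper bound, the symmetric Bernstein inequality and the same AM-GM step give $\bar{u}_{t-1,i,n} \le 3 u_{i,n}/2 + O(L/n)$, hence $\sqrt{\bar{u}_{t-1,i,n} L/n} \le \sqrt{3/2}\sqrt{u_{i,n} L/n} + O(L/n)$. Substituting into the definition of $\wh{u}_{t,i,n}$ and collecting terms gives the claimed $\wh{u}_{t,i,n} \le u_{i,n} + 8\sqrt{u_{i,n} L/n} + 15 L/n$.

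The Bernstein-plus-AM-GM backbone is completely standard, so I expect no conceptual obstacle. The main thing requiring care is bookkeeping the numerical constants $8$ and $15$: they arise because $\sqrt{2} + 4\sqrt{3/2} < 8$ and $\tfrac{2}{3} + 4\sqrt{5/3} + 8 < 15$, so there is comfortable slack. If one dislikes the implicit "first $m$ samples" parameterization, the same conclusion can instead be obtained by applying Freedman's inequality to the martingale $\sum_{\tau \le t}(u_{\tau,i,n} - u_{i,n})\mathbbm{1}\{i_\tau = i\}$, exactly as in the proof of the previous lemma.
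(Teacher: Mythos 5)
Your proposal is correct and follows essentially the same route as the paper: a Bernstein/Freedman-type concentration bound (using that the variance is at most the mean $u_{i,n}$), union-bounded over arms, agents, and sample counts/rounds, then inverted by solving the resulting quadratic in $\sqrt{u_{i,n}}$ and applying AM-GM to get the two-sided sandwich on $\wh{u}_{t,i,n}$. The only cosmetic differences are that you union over the fixed sample count $m$ for the i.i.d.\ per-arm samples instead of applying Freedman's inequality to the round-indexed martingale (the paper's choice, which you correctly note works verbatim), and your slightly larger $\log(2NKT^2)$ factor is absorbed by the slack in the constants $8$ and $15$.
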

\begin{proof}
    According to Freedman's inequality \pref{lem:Freedman}, applying a union bound over $t\in[T]$, $i\in[K]$, and $n\in[N]$, we know that with probability at least $1-\delta$, for all $t\in[T]$, $i\in[K]$, and $n\in[N]$,
    \begin{align}\label{eqn:freedman_u}
        \left|\bar{u}_{t,i,n}-u_{i,n}\right| \leq 2\sqrt{\frac{u_{i,n}\log(NKT/\delta)}{N_{t,i}}}+\frac{\log(NKT/\delta)}{N_{t,i}}.
    \end{align}
    Solving the inequality with respect to $u_{i,n}$, we know that
    \begin{align*}
        \sqrt{u_{i,n}}&\leq \sqrt{\frac{\log(NKT/\delta)}{N_{t,i}}} + \sqrt{\bar{u}_{t,i,n}+\frac{2\log(NKT/\delta)}{N_{t,i}}} \\
        &\leq \sqrt{2\bar{u}_{t,i,n}+\frac{6\log(NKT/\delta)}{N_{t,i}}}, \tag{using AM-GM inequality} \\
        \bar{u}_{t,i,n}&\leq \left(\sqrt{u_{i,n}}+\sqrt{\frac{\log(NKT/\delta)}{N_{t,i}}}\right)^2 \\
        &\leq 2u_{i,n}+\frac{2\log(NKT/\delta)}{N_{t,i}}. \tag{using AM-GM inequality}
    \end{align*}
    Using the above inequality and picking $\delta=\frac{1}{T}$, we can lower bound $\wh{u}_{t,i,n}$ as follows:
    \begin{align*}
        \wh{u}_{t,i,n} &= \bar{u}_{t,i,n} + 4\sqrt{\frac{\bar{u}_{t,i,n}\log(NKT^2)}{N_{t,i}}} + \frac{8\log(NKT^2)}{N_{t,i}} \\
        &\geq \bar{u}_{t,i,n} + 4\sqrt{\frac{\log(NKT^2)}{N_{t,i}}\left(\frac{u_{i,n}}{2}-\frac{3\log(NKT^2)}{N_{t,i}}\right)} + \frac{8\log(NKT^2)}{N_{t,i}} \\
        &\geq \bar{u}_{t,i,n} + 4\sqrt{\frac{u_{i,n}\log(NKT^2)}{2N_{t,i}}} + \frac{(8-4\sqrt{3})\log(NKT^2)}{N_{t,i}} \tag{using $\sqrt{a-b}\geq \sqrt{a}-\sqrt{b}$ for $a,b\geq 0$}\\
        &\geq \bar{u}_{t,i,n} + 2\sqrt{\frac{u_{i,n}\log(NKT^2)}{N_{t,i}}} + \frac{\log(NKT^2)}{N_{t,i}} \\
        &\geq u_{i,n},
    \end{align*}
    where the last inequality uses \pref{eqn:freedman_u}. To upper bound $\wh{u}_{t,i,n}$, we have
    \begin{align*}
        \wh{u}_{t,i,n} &= \bar{u}_{t,i,n} + 4\sqrt{\frac{\bar{u}_{t,i,n}\log(NKT^2)}{N_{t,i}}} + \frac{8\log(NKT^2)}{N_{t,i}} \\
        &\leq u_{i,n} + 2\sqrt{\frac{u_{i,n}\log(NKT^2)}{N_{t,i}}}+\frac{\log(NKT^2)}{N_{t,i}} \tag{using \pref{eqn:freedman_u}}\\
        &\qquad + 4\sqrt{\frac{\log(NKT^2)}{N_{t,i}}\left(2u_{i,n}+\frac{2\log(NKT^2)}{N_{t,i}}\right)} + \frac{8\log(NKT^2)}{N_{t,i}} \\
        &\leq u_{i,n}+8\sqrt{\frac{u_{i,n}\log(NKT^2)}{N_{t,i}}} + \frac{15\log(NKT^2)}{N_{t,i}}. \tag{using AM-GM inequality}
    \end{align*}
    Combining the lower and the upper bound finishes the proof.
\end{proof}

\begin{lemma}\label{lem:pOverN}
    Under~\pref{event:good}, \pref{alg:UCB_Berstein} guarantees that
    \begin{align*}
        \sum_{\tau=KN_0+1}^t\frac{p_{\tau,i}}{N_{\tau,i}}&\leq \order\left(\log T\right), \\
        \sum_{\tau=KN_0+1}^t\sqrt{\frac{p_{\tau,i}}{N_{\tau,i}}}&\leq \order\left(\sqrt{T\log T}\right),
    \end{align*}
    for all $i\in [K]$ and $KN_0+1\leq t\leq T$.
\end{lemma}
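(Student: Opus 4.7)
The plan is to use \pref{event:good} together with the choice $N_0 = 1 + 18\log(KT)$ to get a clean lower bound on $N_{\tau,i}$, then reduce both sums to a harmonic-type estimate, and finally handle the square-root version by Cauchy--Schwarz.

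First, I would observe that under \pref{event:good} and the specific choice $N_0 = 1 + 18\log(KT)$ made in \pref{thm: stoUpper}, the defect of $18\log(KT)$ in the lower bound on $N_{\tau,i}$ is absorbed into $N_0$, yielding the clean inequality
\[
    N_{\tau,i} \;\ge\; 1 + \tfrac{1}{2}\sum_{s=KN_0}^{\tau} p_{s,i} \quad \text{for all } \tau \ge KN_0+1.
\]
Writing $S_{\tau} \defeq \sum_{s=KN_0}^{\tau} p_{s,i}$ (so that $S_{\tau}-S_{\tau-1}=p_{\tau,i} \in [0,1]$ and $S_t \le t \le T$), this gives
\[
    \sum_{\tau=KN_0+1}^{t} \frac{p_{\tau,i}}{N_{\tau,i}} \;\le\; \sum_{\tau=KN_0+1}^{t} \frac{S_{\tau}-S_{\tau-1}}{1+\tfrac{1}{2}S_{\tau}}.
\]

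For the first estimate, the key step is the standard harmonic-sum inequality: using $\log(1+x) \ge x/(1+x)$ for $x \ge 0$ with $x = (S_\tau - S_{\tau-1})/(1+\tfrac{1}{2}S_{\tau-1})$, I would show
\[
    \frac{S_{\tau}-S_{\tau-1}}{1+\tfrac{1}{2}S_{\tau}} \;\le\; 2\log\!\left(\frac{1+\tfrac{1}{2}S_\tau}{1+\tfrac{1}{2}S_{\tau-1}}\right),
\]
so that the sum telescopes and is bounded by $2\log(1+\tfrac{1}{2}S_t) \le 2\log(1+T/2) = \order(\log T)$. This establishes the first claim.

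For the second estimate, I would simply apply Cauchy--Schwarz with the all-ones vector over the summation index:
\[
    \sum_{\tau=KN_0+1}^{t} \sqrt{\frac{p_{\tau,i}}{N_{\tau,i}}} \;\le\; \sqrt{t \cdot \sum_{\tau=KN_0+1}^{t} \frac{p_{\tau,i}}{N_{\tau,i}}} \;\le\; \sqrt{T \cdot \order(\log T)} \;=\; \order(\sqrt{T\log T}),
\]
which gives the second claim.

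I do not anticipate a real obstacle here: the only mildly technical step is the telescoping inequality via $\log(1+x)\ge x/(1+x)$, but this is a textbook manipulation once the lower bound $N_{\tau,i} \ge 1 + \tfrac{1}{2}S_\tau$ is in place. The only point that requires a tiny bit of care is ensuring that the additive $-18\log(KT)$ from \pref{event:good} is fully absorbed by $N_0$, which is exactly why $N_0$ was chosen as $1+18\log(KT)$ in \pref{thm: stoUpper}.
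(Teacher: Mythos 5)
Your proposal is correct and follows essentially the same route as the paper: both use \pref{event:good} with $N_0=1+18\log(KT)$ to get $N_{\tau,i}\ge 1+\tfrac12\sum_{s\le\tau}p_{s,i}$, bound the resulting harmonic-type sum by $\order(\log T)$ (the paper via an integral comparison $\int \frac{dx}{x+2}$, you via the equivalent telescoping inequality $\log(1+x)\ge x/(1+x)$), and finish the square-root bound with Cauchy--Schwarz. The only nitpick is that for your telescoping step the substitution should be $x=\tfrac12(S_\tau-S_{\tau-1})/(1+\tfrac12 S_{\tau-1})$ (your stated $x$ is missing the factor $\tfrac12$), but this does not affect the conclusion.
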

\begin{proof}
    Since \pref{event:good} holds, we know that $N_{t,i}\geq \frac{1}{2}\sum_{\tau=KN_0+1}^tp_{\tau,i}+1$ holds for all $i\in[K]$ and $\tau\geq KN_0+1$ based on the choice of $N_0=18\log KT+1$. Therefore, we know that for each $t\geq KN_0+1$,
    \begin{align*}
        \sum_{\tau=KN_0+1}^t\frac{p_{\tau,i}}{N_{\tau,i}} &\leq \sum_{\tau=KN_0+1}^t\frac{2p_{\tau,i}}{\sum_{\tau'=KN_0+1}^\tau p_{\tau',i}+2} \\
        &\leq 2\int_0^{\sum_{\tau=KN_0+1}^tp_{\tau,i}}\frac{1}{x+2}dx\leq 2\log(T+2).
    \end{align*}
    As for the term $\sum_{\tau=KN_0+1}^t\sqrt{\frac{p_{\tau,i}}{N_{\tau,i}}}$, we have
    \begin{align*}
        \sum_{\tau=KN_0+1}^t\sqrt{\frac{p_{\tau,i}}{N_{\tau,i}}} &\leq \sqrt{(t-KN_0)\sum_{\tau=KN_0+1}^t\frac{p_{\tau,i}}{N_{\tau,i}}} \tag{Cauchy-Schwarz inequality}\\
        &\leq \order(\sqrt{T\log T}).
    \end{align*}
\end{proof}

\subsection{Omitted Details in \pref{sec: stoLower}}\label{app:sto_lower}
\stoLower*
\begin{proof}
Consider the environment $\calE$ that picks $u$ uniformly from $\{u^{(1)},\dots,u^{(K)}\}$, where $u_{:,1}^{(i)}=\epsilon\cdot\mathbf{e}_i\in\R^K$ and $u_{:,j}^{(i)}=\mathbf{1}$ for all $j\in \{2,3,\dots,N\}$. Here, $\epsilon\in(0,\frac{1}{9}]$ is some constant to be specified later. Denote $u^{(0)}$ to be the environment where $u_{:,n}=\mathbf{0}$ for all $n\in[N]$. At each round $t$, $u_{t,i,n}$ is an i.i.d. Bernoulli random variable with mean $u_{i,n}$. For notational convenience, we use $\E_{i}[\cdot]$ when we take expectation over the environment $u^{(i)}$ for $i\in\{0\}\cup[K]$. Let $n_i$ be the number of rounds that action $i$ is selected over the total horizon $T$ for all $i\in[K]$. Therefore, the expected regret with respect to environment $\calE$ (a uniform distribution over $u^{(i)}$, $i\in[K]$) is lower bounded as follows:
\begin{align}
    \E_{\calE}[\Reg] &=  \frac{1}{K}\sum_{i=1}^K\E_i\left[\epsilon^{\frac{1}{N}}\sum_{t=1}^T(1-p_{t,i}^{\frac{1}{N}})\right]\nonumber\\
    &\geq T\epsilon^{\frac{1}{N}}- \frac{\epsilon^{\frac{1}{N}}T^{\frac{N-1}{N}}}{K}\sum_{i=1}^K\E_i\left[\left(\sum_{t=1}^Tp_{t,i}\right)^{\frac{1}{N}}\right]\tag{H\"older's inequality} \\
    &\geq T\epsilon^{\frac{1}{N}}- \frac{\epsilon^{\frac{1}{N}}T^{\frac{N-1}{N}}}{K}\sum_{i=1}^K\left(\E_i\left[\sum_{t=1}^Tp_{t,i}\right]\right)^{\frac{1}{N}} \tag{Jensen's inequality}\\
    &\geq T\epsilon^{\frac{1}{N}}- K^{-\frac{1}{N}}\epsilon^{\frac{1}{N}}T^{\frac{N-1}{N}}\left(\sum_{i=1}^K\E_i\left[\sum_{t=1}^Tp_{t,i}\right]\right)^{\frac{1}{N}}, \label{eqn:reg_lower_bound}
\end{align}
where the last inequality is again due to H\"older's inequality. Let $\text{Ber}(\alpha)$ be the Bernoulli distribution with mean $\alpha$. Combining Exercise 14.4 of~\citep{lattimore2020bandit}, Pinsker's inequality, and Lemma 15.1 of~\citep{lattimore2020bandit}, we have
\begin{align}
    \E_{i}\left[\sum_{t=1}^Tp_{t,i}\right]= \E_{i}\left[n_i\right] \nonumber
    &\leq \E_0\left[n_i\right] + T\sqrt{\frac{1}{2}\E_0[n_i]\KL(\text{Ber}(0) | \text{Ber}(\epsilon))} \nonumber\\
    &\leq \E_0\left[n_i\right] + T\sqrt{\frac{1}{2}\E_0[n_i]\ln\left(1+\frac{\epsilon}{1-\epsilon}\right)} \nonumber\\
    &\leq \E_0\left[n_i\right] + T\sqrt{\E_0[n_i]\frac{\epsilon}{2(1-\epsilon)}} \tag{using $\log(1+x)\leq x$ }\\
    &\leq \E_0\left[n_i\right] + \frac{3T}{4}\sqrt{\E_0[n_i]\epsilon} \nonumber \\
    &= \E_0\left[\sum_{t=1}^Tp_{t,i}\right] + \frac{3T}{4}\sqrt{\E_0\left[\sum_{t=1}^Tp_{t,i}\right]\epsilon}.\label{eqn:lower-exercise}
\end{align}
where the last inequality is because $\epsilon\leq \frac{1}{9}$. 

Taking summation over all $i\in[K]$, we obtain that 
\begin{align}
    &\sum_{i\in[K]}\E_{i}\left[\sum_{t=1}^Tp_{t,i}\right] \nonumber\\
    &\leq \sum_{i\in[K]}\E_0\left[\sum_{t=1}^Tp_{t,i}\right] + \frac{3}{4}T\sum_{i=1}^K\sqrt{\E_0\left[\sum_{t=1}^Tp_{t,i}\right]\epsilon} \nonumber\\
    &\leq T + \frac{3T}{4}\sqrt{K\epsilon\E_0\left[\sum_{i=1}^K\sum_{t=1}^Tp_{t,i}\right]}\nonumber\\
    &= T + \frac{3T}{4}\sqrt{KT\epsilon}\label{eqn:gap_i_j}
\end{align}
where the second inequality is due to Cauchy-Schwarz inequality.

Applying \pref{eqn:gap_i_j} to \pref{eqn:reg_lower_bound}, we obtain that
\begin{align*}
    &\E_{\calE}[\Reg] \\
    &\geq T\epsilon^{\frac{1}{N}} - K^{-\frac{1}{N}}\epsilon^{\frac{1}{N}}T^{\frac{N-1}{N}}\left(T+\frac{3T}{4}\sqrt{KT\epsilon}\right)^{\frac{1}{N}} \\
    &\geq \left(1-K^{-\frac{1}{N}}\right)T\epsilon^{\frac{1}{N}} - K^{-\frac{1}{2N}}\epsilon^{\frac{3}{2N}}T^{\frac{2N+1}{2N}} \tag{using $(a+b)^{\frac{1}{N}}\leq a^{\frac{1}{N}}+b^{\frac{1}{N}}$} \\
    &\geq \frac{\log K}{2N}T\epsilon^{\frac{1}{N}} - K^{-\frac{1}{2N}}\epsilon^{\frac{3}{2N}}T^{\frac{2N+1}{2N}},
\end{align*}
where the third inequality is according to \pref{lem:algebra} with $x=\frac{1}{N}$ and $\alpha = \frac{1}{K}$, meaning that $N\left(1-\frac{1}{K}^{\frac{1}{N}}\right)\geq \frac{\ln K}{2}$.

Picking $\epsilon=\frac{(\log K)^{2N}\cdot K}{(4N)^{2N}T}$, we know that
\begin{align*}
    K^{-\frac{1}{2N}}\epsilon^{\frac{3}{2N}}T^{\frac{2N+1}{2N}} = \frac{\epsilon^{\frac{1}{N}}T\log K}{4N}=\Omega\left(\frac{(\log K)^3}{N^3}\cdot K^{\frac{1}{N}}T^{\frac{N-1}{N}}\right),
\end{align*}
Combining the above all together, we know that $\E_{\calE}[\Reg]\geq \Omega\left(\frac{(\log K)^3}{N^3}\cdot K^{\frac{1}{N}}T^{\frac{N-1}{N}}\right)$. Therefore, there exists one environment among $u^{(i)}, i\in [K]$ such that $\E_{i}[\Reg]\geq \Omega\left(\frac{(\log K)^3}{N^3}\cdot K^{\frac{1}{N}}T^{\frac{N-1}{N}}\right)$, which finishes the proof.
\end{proof}

\section{Omitted Details in \pref{sec: adv}}\label{app: adv}
\subsection{Omitted Details in \pref{sec: adv-lower-bound}}\label{app: adv-lower-bound}
In this section, we prove that that in the adversarial environment, it is also impossible to achieve sublinear regret when $f=\nswprod$. The hard instance construction shares a similar spirit to the one for $f=\nsw$ shown in \pref{thm:advBandit_lower}.
\begin{theorem}
    In the bandit feedback setting, for any algorithm, there exists an adversarial environment such that $\E[\Reg_{\adv}]=\Omega(T)$ for $f=\nswprod$.
\end{theorem}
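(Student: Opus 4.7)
The key obstacle, as the paper emphasizes, is that the construction of \pref{thm:advBandit_lower} does not carry over verbatim: setting the second agent's utility to the constant vector $\mathbf{1}$ collapses $\nswprod(u^\top p)$ to the linear quantity $\inner{p, u_{:,1}}$, whose expectation depends only on the per-arm marginals, so any pair of environments with matching per-arm marginals would share the same optimizer. The plan is therefore to couple both agents nontrivially \emph{across the two arms}, so that the per-arm joint distributions of $(u_{t,i,1},u_{t,i,2})$ still agree under the two environments (preserving bandit indistinguishability) while the cross-arm joint distribution differs enough to move the optimum. Concretely, I take $N=K=2$, fix a centrally symmetric base measure $\mu$ on $\{0,1\}^2$ with $\mu(1,1)=\mu(0,0)=\tfrac{1}{5}$ and $\mu(1,0)=\mu(0,1)=\tfrac{3}{10}$, draw i.i.d.\ $(a_t,b_t)\sim\mu$, and define
\[
\calE:\ u_t=\begin{pmatrix} a_t & b_t \\ b_t & a_t\end{pmatrix},\qquad \calE':\ u_t=\begin{pmatrix} a_t & b_t \\ 1-b_t & 1-a_t\end{pmatrix}.
\]
An enumeration of the four outcomes of $(a_t,b_t)$, using the symmetries $\mu(a,b)=\mu(b,a)=\mu(1-a,1-b)$, shows that the joint distribution of the observed pair $(u_{t,i,1},u_{t,i,2})$ is identical under $\calE$ and $\calE'$ for each $i\in\{1,2\}$; hence the entire bandit-observation trajectory has the same law in the two environments for any algorithm.

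The core computation is then to expand the expected welfare as a quadratic
\[
\E[\nswprod(u_t^\top p)] \;=\; \alpha_1 p_1^2 \;+\; C\, p_1(1-p_1) \;+\; \alpha_2 (1-p_1)^2,
\]
where the per-arm products $\alpha_i=\E[u_{t,i,1}u_{t,i,2}]$ agree under $\calE,\calE'$ and equal $\tfrac{1}{5}$, while the cross term $C=\E[u_{t,1,1}u_{t,2,2}+u_{t,1,2}u_{t,2,1}]$ evaluates to $\E[a_t+b_t]=1$ under $\calE$ but to $\E[a_t(1-a_t)+b_t(1-b_t)]=0$ under $\calE'$. Plugging in, $\E[\nswprod(u_t^\top p)]$ is strictly concave under $\calE$ with unique maximizer $p_1^\star=\tfrac12$ (value $0.35$), and strictly convex under $\calE'$ with maximizers $p_1\in\{0,1\}$ (value $0.2$); the per-round regret equals $0.6\,(p_1-\tfrac12)^2$ in $\calE$ and $0.4\,p_1(1-p_1)$ in $\calE'$, so whichever side of the interval $[\tfrac14,\tfrac34]$ contains $p_{t,1}$, at least one of the two per-round regrets is $\Omega(1)$.

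The final step is the same averaging argument used in \pref{thm:advBandit_lower}: pick the threshold set $A=[\tfrac14,\tfrac34]$, let $\alpha_\calE$ be the $\calE$-probability that more than $T/2$ rounds have $p_{t,1}\notin A$ and $\bar\alpha_{\calE'}$ the $\calE'$-probability of the complementary event; indistinguishability of the observation process forces $\alpha_\calE+\bar\alpha_{\calE'}=1$, and combining this with the per-round regret bounds above yields $\max\{\E_\calE[\Reg_{\adv}],\E_{\calE'}[\Reg_{\adv}]\}=\Omega(T)$. The main obstacle I foresee is checking that arm $2$'s observed joint distribution really coincides under the two rather different couplings $(a,b)\mapsto(b,a)$ and $(a,b)\mapsto(1-b,1-a)$; this is exactly what the swap and central symmetries of $\mu$ purchase, and is the reason for the specific probabilities $\tfrac15,\tfrac15,\tfrac{3}{10},\tfrac{3}{10}$.
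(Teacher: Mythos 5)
Your proposal is correct and follows essentially the same route as the paper's proof: two i.i.d.\ binary $2\times 2$ environments whose per-arm (row) observation distributions coincide but whose cross-arm correlations differ in sign, so that $\E[\nswprod(u^\top p)]$ is concave with an interior maximizer under $\calE$ and convex with boundary maximizers under $\calE'$, followed by the same threshold-plus-averaging indistinguishability argument. The only difference is cosmetic — you realize the two environments via a symmetric coupling of a base measure on $\{0,1\}^2$ (with weights $\tfrac15,\tfrac{3}{10},\tfrac{3}{10},\tfrac15$), whereas the paper writes down explicit probabilities perturbing the uniform distribution on $\{0,1\}^{2\times 2}$ — and your computations (values $0.35$ vs.\ $0.2$, per-round gaps $0.6(p_1-\tfrac12)^2$ and $0.4\,p_1(1-p_1)$) check out.
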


\begin{proof}
    We consider the learning environment with two agents and two arms.
    The agents utilities are binary, meaning that $u\in\{0,1\}^{2\times 2}$.
    We construct two distributions $\calE$ and $\calE'$ with support $\{0, 1\}^{2\times 2}$.
    To define environment $\calE$, we use $q_{wxyz}$ for any $w,x,y,z\in\{0,1\}$ to denote the probability that $u_{1,:} = (w,x)$ and $u_{2,:} = (y,z)$ when  $u\sim\calE$.
    For simplicity of notation, the binary number $wxyz$ will be written in decimal form (i.e. $q_{8} = \Pr_{u\sim \calE}[u_{1,:} = (1,0), u_{2,:} = (0,0)]$).
    For environment $\calE$, we assign the probabilities
    \begin{align*}
        &q_i = \frac{1}{16} \quad \text{for }i\in\{0,\dots,15\}\setminus\{0,2,4,6\}
        &(q_0, q_2, q_4, q_6) = \left(\frac{1}{8},0,0,\frac{1}{8}\right)
    \end{align*}
    Similarly, for environment $\calE'$, we use $q_{wxyz}'$ for any $w,x,z,y\in\{0, 1\}$ to denote the probability that $u_{1,:}' = (w,x)$ and $u_{2,:}' = (y,z)$ when $u'\sim\calE'$.
    Again, we will write the binary number $wxyz$ in decimal form for ease of notation.
    To environment $\calE'$ we assign probabilities
    \begin{align*}
        &q_i' = \frac{1}{16}\quad\text{for }i\in\{0,\dots, 15\}\setminus\{1,3,5,7\}
        &(q_{1}', q_{3}', q_{5}', q_{7}') = \left(0,\frac{1}{8},\frac{1}{8},0\right)
    \end{align*}

    Next, we argue that the learner's observations are equivalent in distribution in $\calE$ and $\calE'$, since the marginal distribution of every possible observation of each action is the same.
    Specifically,

    \begin{itemize}[leftmargin=*]
        \item When action 1 is played, the learner's observation $u_{1,:}$ in both $\calE$ and $\calE'$ are given by the following marginal distribution.
        \begin{itemize}
            \item The probability of observation $(0,0)$ is $q_{0} + q_{1} + q_{2} + q_{3} = q_{0}' + q_{1}' + q_{2}' + q_{3}' = \frac{1}{4}$
            \item The probability of observation $(0,1)$ is  $q_{4} + q_{5} + q_{6} + q_{7} = q_{4}' + q_{5}' + q_{6}' + q_{7}' = \frac{1}{4}$
            \item The probability of observation $(1,0)$ is  $q_{8} + q_{9} + q_{10} + q_{11} = q_{8}' + q_{9}' + q_{10}' + q_{11}' = \frac{1}{4}$
            \item The probability of observation $(1,1)$ is  $q_{12} + q_{13} + q_{14} + q_{15} = q_{12}' + q_{13}' + q_{14}' + q_{15}' = \frac{1}{4}$
        \end{itemize}
        \item When action 2 is played, the learner's observation $u_{2,:}$ in both $\calE$ and $\calE'$ are given by the following marginal distribution.
        \begin{itemize}
            \item The probability of observation $(0,0)$ is $q_{0} + q_{4} + q_{8} + q_{12} = q_{0}' + q_{4}' + q_{8}' + q_{12}' = \frac{1}{4}$
            \item The probability of observation $(0,1)$ is $q_{1} + q_{5} + q_{9} + q_{13} = q_{1}' + q_{5}' + q_{9}' + q_{13}' = \frac{1}{4}$
            \item The probability of observation $(1,0)$ is $q_{2} + q_{6} + q_{10} + q_{14} = q_{2}' + q_{6}' + q_{10}' + q_{14}' = \frac{1}{4}$
            \item The probability of observation $(1,1)$ is $q_{3} + q_{7} + q_{11} + q_{15} = q_{3}' + q_{7}' + q_{11}' + q_{15}' = \frac{1}{4}$
        \end{itemize}
    \end{itemize}
    Direct calculation shows that
    \begin{align*}
        &\E_{u\sim \calE}\left[\nswprod(u^\top p)\right] \\
        &= (q_5-q_6-q_9+q_{10})p_1^2 + (q_6-q_7-2q_5+q_9+q_{11}-q_{13}+q_{14})p_1+(q_5+q_7+q_{13}+q_{15}) \\
        &= -\frac{1}{16}p_1^2 + \frac{1}{16}p_1+\frac{1}{4},\\
        &\E_{u\sim \calE'}\left[\nswprod(u^\top p)\right] \\
        &= (q_5'-q_6'-q_9'+q_{10}')p_1^2 + (q_6'-q_7'-2q_5'+q_9'+q_{11}'-q_{13}'+q_{14}')p_1+(q_5'+q_7'+q_{13}'+q_{15}') \\
        &= \frac{1}{16}p_1^2 - \frac{1}{16}p_1+\frac{1}{4}.
    \end{align*}
    Therefore, we compute the learner's best strategy for environments $\calE$ and $\calE'$ by direct calculation:
    \begin{align*}
        &p_\star = \argmax_{p\in\Delta_2} \E_{u\sim\calE} \left[ \nswprod(u^\top p) \right] = \argmax_{p\in\Delta_2}\left[ \frac{1}{4} + \frac{1}{16}p_1 - \frac{1}{16}p_1^2 \right] = \left(\frac{1}{2}, \frac{1}{2}\right) \\
        &p_\star' = \argmax_{p\in\Delta_2} \E_{u\sim\calE'} \left[ \nswprod(u^\top p) \right] = \argmax_{p\in\Delta_2}\left[ \frac{1}{4}-\frac{1}{16}p_1 + \frac{1}{16}p_1^2 \right] = \{(0, 1),(1,0)\}
    \end{align*}

    Next, consider a distribution $p\in\Delta_2$ such that $p_1 \in \left[0,\frac{1}{4}\right] \cup \left[\frac{3}{4}, 1\right]$.
    For such $p$, direct calculation shows that $\E_{u\sim\calE} \left[\nswprod(u^\top p_\star) - \nswprod(u^\top p)\right] \ge \Delta$, where $\Delta = \frac{1}{256}$.
    On the other hand, for a strategy $p\in\Delta_2$ with $p_1 \in\left(\frac{1}{4},\frac{3}{4}\right)$, we have $\E_{u\sim\calE'}\left[ \nswprod(u^\top p_\star') - \nswprod(u^\top p)\right] > \Delta$ as well.
    Given any algorithm, let $\alpha_{\calE}$ be the probability that the number of rounds $p_{t,1} \in \left[0,\frac{1}{4}\right] \cup \left[\frac{3}{4}, 1\right]$ is larger than $\frac{T}{2}$ under environment $\calE$. Let $\bar{\alpha}_{\calE'}$ be the probability of the complement of this event under environment $\calE'$.
    By definition
    \begin{align*}
        &\E_{\calE}[\Reg_{\adv}] \ge \E_{\calE}\left[\sum_{t=1}^T \nswprod(u_t^\top p_\star) - \sum_{t=1}^T \nswprod(u_t^\top p_t) \right] \ge \frac{\alpha_{\calE} T \Delta}{2} \\
        &\E_{\calE'}[\Reg_{\adv}] \ge \E_{\calE'}\left[\sum_{t=1}^T \nswprod(u_t^\top p_\star') - \sum_{t=1}^T \nswprod(u_t^\top p_t) \right] \ge \frac{\bar{\alpha}_{\calE'} T \Delta}{2}
    \end{align*}
    Since the feedback for the algorithm is the same in distribution, we have $\alpha_{\calE} + \bar{\alpha}_{\calE'} = 1$. Thus, we have
    \begin{align*}
        \max\{\E_{\calE}[\Reg_{\adv}], \E_{\calE'}[\Reg_{\adv}]\}
        \ge \frac{\E_{\calE}[\Reg_{\adv}] + \E_{\calE'}[\Reg_{adv}]}{2}
        \ge \frac{(\alpha_{\calE} + \bar{\alpha}_{\calE'}) T\Delta}{4} = \Omega(T).
    \end{align*}
\end{proof}

\subsection{Omitted Details in \pref{sec: adv-full-info}}\label{app: adv-full-info}

\subsubsection{Concave and Pareto Optimal SWFs}\label{app: concave-po}
Formally, for a function $f:[0,1]^N\mapsto [0,1]$, concavity and Pareto Optimality are defined as:
\begin{itemize}[leftmargin=*]
    \item Concavity: $f(\alpha x + (1 - \alpha) y) \le \alpha f(x) + (1 - \alpha) f(y)$ for any $\alpha\in [0,1]$ and $x,y\in[0,1]^{N}$.
    \item Pareto optimality: for any $x,y\in[0,1]^{N}$, $x_n \ge y_n$ for all $n\in[N]$ implies $f(x) \ge f(y)$.
\end{itemize}

Pareto optimality is a ``fundamental property'' in social choice theory because it ensures that a SWF prefers alternatives that strictly more efficient: everyone is no worse off \citep{kaneko1979nsw}.
Concavity appears less in social choice literature.
However, it promotes equity by modeling diminishing levels of desirability with the increase of a single agent's utility.

In the following, we provide examples of SWFs that satisfy concavity and Pareto optimality. Each of the following SWFs are parameterized by fixed weights $w\in\Delta_K$.
\begin{itemize}[leftmargin=*]
    \item Utilitarian SWF: $f(u) =  \langle w, u\rangle$;
    \item Generalized Gini Index (GGI): $f(u) = \min_{\pi\in\mathbb{S}_N} \inner{w_\pi, u}$, where $\mathbb{S}_N$ is the set of permutations over $N$ items and $w_\pi$ is weights $w$ permuted according to $\pi \in \mathbb{S}_N$;
    \item Weighted NSW: $f(u) = \prod_{n\in N} u_n^{w_n}$.
\end{itemize}

The last notable fact about the class of concave and Pareto Optimal SWFs is that it is closed under convex combinations.
Specifically, for two concave and Pareto Optimal functions $f,g: [0,1]^N \to [0,1]$, the function $h(\cdot) = \lambda f(\cdot) + (1-\lambda) g(\cdot)$ for any $\lambda\in[0,1]$ is concave and Pareto Optimal.
\citep{chen2021guide} discusses how such convex combinations can be used to combine a SWF prioritizing efficiency and another prioritizing equity to derive a different SWF that prioritizes a balance between efficiency and equity.

\subsubsection{Omitted Details in \pref{sec: log-barrier}}\label{app: log-barrier}
In this section, we prove \pref{thm: log-barrier}, which shows that $\order(\sqrt{KT\log T})$ regret is achievable for all concave and Pareto optimal SWFs.

\logBarrier*
\begin{proof}
    Using the concavity of $f$, we can upper bound $\Reg_{\adv}$ as follows:
    \begin{align*}
        \Reg_{\adv}
        &= \max_{p \in \Delta_K} \sum_{t=1}^T f(u_t^\top p) - \sum_{t=1}^T f(u_t^\top p_t) \\
        &\le \underbrace{\max_{p \in \Delta_{K,\frac{1}{KT}}} \sum_{t = 1}^{T} \langle -\nabla f(u_t^\top p_t), p_t - p \rangle}_{\term{1}} + \underbrace{\max_{p\in\Delta_K} \sum_{t = 1}^T f(u_t^\top p) - \max_{p\in \Delta_{K,\frac{1}{KT}}} \sum_{t = 1}^{T} f(u_t^\top p)}_{\term{2}},
    \end{align*}
    where $\Delta_{K,\frac{1}{KT}}=\{p\in\Delta_K~|~p_i\geq \frac{1}{KT}, \forall i\in[K]\}$.

    To bound $\term{1}$, according to a standard analysis of FTRL/OMD with log-barrier regularizer (e.g. Lemma 12 of~\citep{agarwal2017corralling}), we know that: 
    \begin{align}\label{eqn:term_one}
        \term{1} \le \max_{p \in \Delta_{K,\frac{1}{KT}}}\frac{D_{\psi}(p,p_1)}{\eta} + \eta \sum_{t = 1}^{T} \sum_{i = 1}^{K} p_{t, i}^2 \cdot \left[ \nabla f(u_t^\top p_t) \right]_i^2, 
    \end{align}
    where $D_{\psi}(p,q)\triangleq\psi(p)-\psi(q)-\inner{\nabla\psi(q),p-q}$ is the Bregman divergence between $p$ and $q$ with respect to $\psi$. To further bound the right-hand side, note that $p_1=\frac{1}{K}\cdot\mathbf{1}$. Direct calculation shows that for any $p\in\Delta_{K,\frac{1}{KT}}$,
    \begin{align}
        D_{\psi}(p,p_1)&=\sum_{i=1}^K\left(\frac{p_i}{p_{1,i}}-1-\log\left(\frac{p_i}{p_{1,i}}\right)\right) \nonumber\\
        &=\sum_{i=1}^K\log\left(\frac{1}{Kp_i}\right) \nonumber\\
        &\leq K\log\left(\frac{1}{K\cdot\frac{1}{KT}}\right) \tag{since $p_i\geq \frac{1}{KT}$ for all $i\in[K]$} \\
        &\leq K\log T. \label{eqn:bregman}
    \end{align}
    Using the Pareto optimality property of $f$ and the positivity of the utility matrix $u_t$, we know that $[\nabla f(u_t^\top p)]_i\geq 0$, meaning that $\sum_{i=1}^Kp_{t,i}^2\left[ \nabla f(u_t^\top p_t) \right]_i^2\leq \inner{p_t,\nabla f(u_t^\top p_t)}^2$.
    
    Moreover, using the concavity property of $f$, we know that $\inner{p_t,\nabla f(u_t^\top p_t)}\leq f(u_t^\top p_t) - f(u_t^\top \mathbf{0}) = f(u_t^\top p_t) \leq 1$. Combining the above two inequalities means that
    \begin{align}\label{eqn:stab}
        \sum_{t=1}^T\sum_{i=1}^Kp_{t,i}^2\left[ \nabla f(u_t^\top p_t) \right]_i^2 \leq T.
    \end{align}
    Combining \pref{eqn:bregman} and \pref{eqn:stab}, we can upper bound $\term{1}$ as follows:
    \begin{align}\label{eqn:t1}
        \term{1} \leq \frac{K\log T}{\eta} + \eta T.
    \end{align}

    Denote the optimal distribution $p^\star = \argmax_{p\in \Delta_K} \sum_{t = 1}^{T} f(u_t^\top p)$. Recall that $p_1=\frac{1}{K}\cdot \mathbf{1}$. Now we upper bound $\term{2}$ as follows:
    \begin{align}
        \term{2}&=\sum_{t = 1}^{T} f(u_t^\top p^\star) - \max_{p\in\Delta_{K, \frac{1}{KT}}} \sum_{t = 1}^{T} f(u_t^\top p)\nonumber\\
        &\le \sum_{t = 1}^{T} f(u_t^\top p^\star) - \sum_{t = 1}^{T} f\left(u_t^\top \left(\left(1 - \frac{1}{T}\right) p^\star + \frac{1}{T}\cdot p_1\right)\right)
        \tag{$(1-\frac{1}{T})p^\star + \frac{1}{T}p_1 \in \Delta_{K,\frac{1}{KT}}$}\\
        &\le \sum_{t = 1}^{T} f(u_t^\top p^\star) - \sum_{t = 1}^{T} \left[\left(1 - \frac{1}{T}\right)\cdot f(u_t^\top p^\star) + \frac{1}{T}\cdot f(u_t^\top p_1) \right]
        \tag{Concavity} \\
        &\le \frac{1}{T}\cdot\sum_{t = 1}^{T}  f(u_t^\top p^\star) \tag{since $f(u_t^\top p_1)\geq 0$}\\
        &\le 1. \label{eqn:t2}
    \end{align}
    Combining \pref{eqn:t1} and \pref{eqn:t2}, and choosing $\eta =\sqrt{\frac{K\log T}{T}}$ finishes the proof.
\end{proof}

\subsubsection{Omitted Details in \pref{sec: tsallis-inf}}\label{app: tsallis-inf}
In this section, we present the omitted proof for \pref{thm: tsallis-inf}, which shows a better dependency on $K$ compared with \pref{thm: log-barrier}.
\tsallisInf*
\begin{proof}
    Consider the case when $N \geq 3$. Direct calculation shows that
    \begin{align}\label{eqn:grad_nsw}
        \left[\nabla f(u^\top p)\right]_i=\frac{1}{N}\sum_{n=1}^N\frac{u_{i,n}}{\inner{p,u_{:,n}}^{1-\frac{1}{N}}}.
    \end{align}
    Using the concavity of $f$ and a standard analysis of FTRL with Tsallis entropy (e.g.,~\citep[Theorem~1]{lecture13}), we know that 
    \begin{align*}
        \Reg_{\adv} &= \sum_{t=1}^T\left(f(u_t^\top p^\star) - f(u_t^\top p_t)\right) \\
        &\leq \langle \nabla f(u_t^\top p_t), p^\star-p_t \rangle \tag{concavity of $f$}\\
        &\leq \frac{K^{1-\beta}-1}{\eta(1-\beta)}+\frac{\eta}{\beta}\sum_{t=1}^T\sum_{i=1}^Kp_{t,i}^{2-\beta}\left[\nabla f(u_t^\top p_t)\right]_i^2 \\ 
        &= \frac{K^{1-\beta}-1}{\eta(1-\beta)} + \frac{\eta}{N^2\beta}\sum_{t=1}^T\sum_{i=1}^K\left(\sum_{n=1}^N\frac{p_{t,i}^{1-\frac{\beta}{2}}u_{t,i,n}}{(\sum_{j=1}^K u_{t,j,n}\cdot p_{t,j})^{1-\frac{1}{N}}}\right)^2 \tag{using \pref{eqn:grad_nsw}}\\
        &\leq \frac{K^{1-\beta}-1}{\eta(1-\beta)} + \frac{\eta}{N\beta}\sum_{t=1}^T\sum_{i=1}^K\sum_{n=1}^N\frac{p_{t,i}^{2-\beta}u_{t,i,n}^2}{(\sum_{j=1}^Ku_{t,j,n}\cdot p_{t,j})^{2-\frac{2}{N}}} \tag{Cauchy-Schwarz inequality}\\
        &\leq \frac{K^{1-\beta}-1}{\eta(1-\beta)} + \frac{\eta}{N\beta}\sum_{t=1}^T\sum_{n=1}^N\frac{\sum_{i=1}^Kp_{t,i}^{2-\beta}u_{t,i,n}^2}{\sum_{j=1}^Kp_{t,j}^{^{2-\frac{2}{N}}}u_{t,j,n}^{^{2-\frac{2}{N}}}} \tag{since $(\sum_ix_i)^{\alpha}\geq \sum_{i}x_i^{\alpha}$ for $\alpha\geq 1$} \\
        &\leq \frac{K^{1-\beta}}{\eta(1-\beta)} + \frac{\eta}{N\beta}\sum_{t=1}^T\sum_{n=1}^N\frac{\sum_{i=1}^Kp_{t,i}^{2-\beta}u_{t,i,n}^{2-\frac{2}{N}}}{\sum_{j=1}^Kp_{t,j}^{^{2-\frac{2}{N}}}u_{t,j,n}^{^{2-\frac{2}{N}}}}. \tag{since $u_{t,i,n}\in[0,1]$}
    \end{align*}
    Picking $\beta=\frac{2}{N}$, the first term can be upper bounded by $\frac{3K^{1-\frac{2}{N}}}{\eta}$ and the second term can be upper bounded by $\frac{\eta T}{\beta}=\frac{\eta NT}{2}$. Further picking the optimal choice of $\eta$ finishes the proof.

    When $N = 2$ and $\beta=\frac{2}{N}=1$, the regularizer $\psi(p)=\frac{1-\sum_{i=1}^Kp_i^\beta}{1-\beta}$ becomes the negative Shannon entropy $\psi(p) = \sum_{i=1}^Kp_i\log p_i$. Using the concavity of $f$ and following a standard analysis of FTRL with Shannon entropy regularizer (e.g., \citep[Theorem 5.2]{hazan2016introduction}), we obtain that
    \begin{align*}
        \Reg_{\adv}&= \sum_{t=1}^T\left(f(u_t^\top p^\star) - f_t(u_t^\top p_t)\right) \\
        &\leq \sum_{t=1}^T\langle \nabla f(u_t^\top p_t), p^\star-p_t \rangle \tag{using the concavity of $f$}\\
        &\leq \frac{\psi(p^\star)-\psi(p_1)}{\eta} + 2\eta\sum_{t = 1}^{T}\sum_{i = 1}^K p_{t,i} \left[ \nabla f(u_t^\top p_t) \right]_i^2 \tag{by~\citep[Theorem 5.2]{hazan2016introduction}}\\
        &= \frac{\ln K}{\eta} + 2\eta \sum_{t = 1}^{T}\sum_{i = 1}^K p_{t,i} \left(\frac{u_{t,i,1}}{2\sqrt{\inner{p_t,u_{t,:,1}}}}+\frac{u_{t,i,2}}{2\sqrt{\inner{p_t,u_{t,:,2}}}}\right)^2 \\
        &\leq \frac{\ln K}{\eta} + \eta \sum_{t = 1}^{T}\left(\frac{\sum_{i=1}^Kp_{t,i}u_{t,i,1}^2}{\sum_{i=1}^Kp_{t,i}u_{t,i,1}}+\frac{\sum_{i=1}^Kp_{t,i}u_{t,i,2}^2}{\sum_{i=1}^Kp_{t,i}u_{t,i,2}}\right)\tag{AM-GM inequality}\\
        &\le \frac{\ln K}{\eta} + 2\eta T.
        \tag{since $u_{t,i,n}\in [0,1]$ for all $t,i,n$}
    \end{align*}
    Picking $\eta=\sqrt{\frac{\log K}{T}}$ shows that $\Reg_{\adv}= \order\left(\sqrt{T\log K}\right) = \otil(\sqrt{T})$ for $N=2$.
\end{proof}

\subsubsection{Omitted Details in \pref{sec: exp-concave}}\label{app:exp-concave}
In this section, we show the proof for \pref{thm: exp-concave}, which shows that logarithmic regret is achievable when there is at least one agent who is indifferent about the learner’s choice.
\expConcave*
\begin{proof}
    To show that EWOO algorithm achieves logarithmic regret, we need to show that $f_t(p)\triangleq-\nsw(u_t^\top p)$ is $\alpha$-exp-concave for some $\alpha>0$ for all $t\in[T]$, meaning that
    \begin{align*}
        \nabla^2 f_t(p) - \alpha\nabla f_t(p)\nabla f_t(p)^\top \succeq 0.
    \end{align*}
    Let $A_t \subseteq [N]$ be the set of agents with $u_{t,:,n} = c_{t,n}\cdot\mathbf{1}$ for all $n \in A$ on round $t \in [T]$. It is guaranteed that $|A_t| \ge M$ for all $t\in [T]$. Denote $B_t = [N]\setminus A_t$. Direct calculation shows that 

\begin{align*}
    \nabla f_t(p) &= \frac{\Pi_{m\in A_t}c_{t,m}^{\frac{1}{N}}}{N}\sum_{n \in B_t} \frac{f_t(p)}{\inner{p,u_{t,:,n}}}u_{t,:,n}, \\
    \nabla^2 f_t(p)
    &= \frac{\Pi_{m\in A_t}c_{t,m}^{\frac{1}{N}}}{N}\sum_{n \in B_t}\frac{u_{t,:,n}\nabla f_t(p)^\top\inner{p,u_{t,:,n}} - f_t(p)u_{t,:,n}u_{t,:,n}^\top}{\inner{p,u_{t,:,n}}^2} \\
    &=  \frac{f_t(p)\Pi_{m\in A_t}c_{t,m}^{\frac{2}{N}}}{N^2}\left(\sum_{n \in B_t}\frac{u_{t,:,n}}{\inner{p,u_{t,:,n}}}\right)\left(\sum_{n\in B_t}\frac{u_{t,:,n}}{\inner{p,u_{t,:,n}}}\right)^\top  \\
    &\qquad - \frac{f_t(p)\Pi_{m\in A_t}c_{t,m}^{\frac{1}{N}}}{N}\sum_{n \in B_t}\frac{u_{t,:,n}u_{t,:,n}^\top}{\inner{p,u_{t,:,n}}^2}.
\end{align*}
For notational convenience, let $\lambda_t=\Pi_{m\in A_t}c_{t,m}^{\frac{1}{N}}\leq 1$. Picking $\alpha=\frac{M}{N-M}$, we know that
\begin{align*}
    &\nabla^2f_t(p) - \alpha\nabla f_t(p)\nabla f_t(p)^\top
    \\
    &= \frac{\lambda_t^2f_t(p)}{N^2}\left(\sum_{n\in B_t}\frac{u_{t,:,n}}{\inner{p,u_{t,:,n}}}\right)\left(\sum_{n\in B_t}\frac{u_{t,:,n}}{\inner{p,u_{t,:,n}}}\right)^\top  - \frac{\lambda_tf_t(p)}{N}\sum_{n \in B_t}\frac{u_{t,:,n}u_{t,:,n}^\top}{\inner{p,u_{t,:,n}}^2} \\
    &\qquad - \frac{\alpha \lambda_t^2f_t(p)^2}{N^2}\left(\sum_{n\in B_t}\frac{u_{t,:,n}}{\inner{p,u_{t,:,n}}}\right)\left(\sum_{n\in B_t}\frac{u_{t,:,n}}{\inner{p,u_{t,:,n}}}\right)^\top \\
    &= \frac{-\lambda_tf_t(p)}{N}\left[ \sum_{n\in B_t}\frac{u_{t,:,n}u_{t,:,n}^\top}{\inner{p,u_{t,:,n}}^2} - \frac{\lambda_t(1-\alpha f_t(p))}{N}\left(\sum_{n\in B_t}\frac{u_{t,:,n}}{\inner{p,u_{t,:,n}}}\right)\left(\sum_{n\in B_t}\frac{u_{t,:,n}}{\inner{p,u_{t,:,n}}}\right)^\top\right] \\
    &\succeq \frac{-\lambda_tf_t(p)}{N}\left[ \sum_{n\in B_t}\frac{u_{t,:,n}u_{t,:,n}^\top}{\inner{p,u_{t,:,n}}^2} - \frac{1-\alpha f_t(p)}{N}\left(\sum_{n\in B_t}\frac{u_{t,:,n}}{\inner{p,u_{t,:,n}}}\right)\left(\sum_{n\in B_t}\frac{u_{t,:,n}}{\inner{p,u_{t,:,n}}}\right)^\top\right] \\
    &\succeq \frac{-\lambda_tf_t(p)}{N}\left[ \sum_{n\in B_t}\frac{u_{t,:,n}u_{t,:,n}^\top}{\inner{p,u_{t,:,n}}^2} - \frac{1+\alpha}{N}\left(\sum_{n\in B_t}\frac{u_{t,:,n}}{\inner{p,u_{t,:,n}}}\right)\left(\sum_{n\in B_t}\frac{u_{t,:,n}}{\inner{p,u_{t,:,n}}}\right)^\top\right] \\
    &= \frac{-\lambda_tf_t(p)}{N}\left[ \sum_{n\in B_t}\frac{u_{t,:,n}u_{t,:,n}^\top}{\inner{p,u_{t,:,n}}^2} - \frac{1}{N-M}\left(\sum_{n\in B_t}\frac{u_{t,:,n}}{\inner{p,u_{t,:,n}}}\right)\left(\sum_{n\in B_t}\frac{u_{t,:,n}}{\inner{p,u_{t,:,n}}}\right)^\top\right] \\
    &\succeq \frac{-\lambda_tf_t(p)}{N}\left[ \sum_{n\in B_t}\frac{u_{t,:,n}u_{t,:,n}^\top}{\inner{p,u_{t,:,n}}^2} - \frac{1}{|B_t|}\left(\sum_{n\in B_t}\frac{u_{t,:,n}}{\inner{p,u_{t,:,n}}}\right)\left(\sum_{n\in B_t}\frac{u_{t,:,n}}{\inner{p,u_{t,:,n}}}\right)^\top\right] \\
    &\succeq 0, \tag{Cauchy-Schwarz inequality}
\end{align*}
where the first inequality is because $\lambda_t\leq 1$ and $f_t(p)\leq 0$; the second inequality is because $f_t(p)\geq -1$; the third inequality is because $|B_t|\leq \frac{1}{N-M}$.
This shows that the $f_t(p)$ is $\frac{M}{(N-M)}$-exp-concave. Therefore, according to Theorem 4.4 of~\citep{hazan2016introduction}, we know that the EWOO algorithm guarantees that
\begin{align*}
    \sum_{t=1}^T(f_t(p_t) - f_t(p^\star))\leq \left(\frac{N - M}{M}\right) \cdot K\log T+\frac{2(N - M)}{M}.
\end{align*}
\end{proof}

\section{Auxiliary Lemmas}\label{app: aux}
In this section, we include several auxiliary lemmas that are useful in the analysis.

\begin{lemma}\label{lem:algebra_prod}
    Let $a_1,\dots, a_n, b_1,\dots b_n\in[0,1]$, where $a_i\geq b_i$ for all $i\in[n]$. Then, $\prod_{i=1}^na_i-\prod_{i=1}^nb_i\leq \sum_{i=1}^n(a_i-b_i)$.
\end{lemma}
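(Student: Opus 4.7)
The plan is to prove this by a telescoping decomposition (or equivalently, induction on $n$), exploiting that all factors lie in $[0,1]$ and that $a_i \geq b_i$.

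First I would write the difference of products as a telescoping sum:
\begin{equation*}
\prod_{i=1}^n a_i - \prod_{i=1}^n b_i = \sum_{k=1}^n \left(\prod_{i<k} a_i\right)(a_k - b_k)\left(\prod_{i>k} b_i\right),
\end{equation*}
which is the standard identity obtained by swapping one $b_k$ for an $a_k$ at a time. Since $a_i, b_i \in [0,1]$, both products $\prod_{i<k} a_i$ and $\prod_{i>k} b_i$ lie in $[0,1]$, and since $a_k \geq b_k$ the factor $(a_k - b_k)$ is nonnegative. Hence each summand is at most $a_k - b_k$, giving the claim.

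Alternatively, and perhaps even cleaner, I would induct on $n$. The base case $n=1$ is immediate. For the inductive step, write
\begin{equation*}
\prod_{i=1}^{n+1} a_i - \prod_{i=1}^{n+1} b_i = a_{n+1}\left(\prod_{i=1}^n a_i - \prod_{i=1}^n b_i\right) + (a_{n+1}-b_{n+1})\prod_{i=1}^n b_i,
\end{equation*}
and use $a_{n+1} \leq 1$, $\prod_{i=1}^n b_i \leq 1$, together with the inductive hypothesis, to conclude $\prod_{i=1}^{n+1} a_i - \prod_{i=1}^{n+1} b_i \leq \sum_{i=1}^{n+1}(a_i - b_i)$.

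There is no real obstacle here; the only thing to be careful about is ensuring sign correctness when bounding each telescoped term, which is handled by the assumptions $a_i \geq b_i$ and $a_i, b_i \in [0,1]$ (so that the prefactors are in $[0,1]$ and we are bounding a nonnegative quantity). I would present the telescoping version since it makes the role of the $[0,1]$ assumption most transparent and avoids inductive bookkeeping.
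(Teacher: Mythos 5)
Your proposal is correct and its main (telescoping) argument is exactly the paper's proof: the same swap-one-factor-at-a-time identity followed by bounding each summand by $a_k-b_k$ using $a_i,b_i\in[0,1]$ and $a_k\geq b_k$. The inductive variant you mention is a fine equivalent repackaging but adds nothing beyond the paper's argument.
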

\begin{proof}
    Direct calculation shows that
    \begin{align*}
        \prod_{i=1}^na_i-\prod_{i=1}^nb_i &=\sum_{j=1}^n\left(\prod_{i=1}^ja_i\prod_{i=j+1}^nb_i - \prod_{i=1}^{j-1}a_i\prod_{i=j}^nb_i\right) \\
        &=\sum_{j=1}^n\left((a_j-b_j)\prod_{i=1}^{j-1}a_i\prod_{i=j+1}^nb_i\right) \\
        &\leq\sum_{j=1}^n(a_j-b_j).
    \end{align*}
\end{proof}

\begin{lemma}\label{lem:algebra}
    For all $x\in(0,1)$ and $\alpha\in(0,1)$ satisfying $1+x\ln \alpha\geq 0$, we have $\frac{1-\alpha^x}{x}\geq -\frac{\ln\alpha}{2}$.
\end{lemma}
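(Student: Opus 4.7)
The plan is to reduce the inequality to a clean one-variable statement by substituting $y = -\ln \alpha > 0$ (since $\alpha \in (0,1)$) and $z = xy$. Under this substitution, $\alpha^x = e^{-xy} = e^{-z}$, the hypothesis $1 + x \ln \alpha \geq 0$ becomes exactly $z \in [0,1]$, and the desired conclusion $\frac{1 - \alpha^x}{x} \geq -\frac{\ln \alpha}{2}$ becomes
\[
1 - e^{-z} \;\geq\; \frac{z}{2}, \qquad z \in [0,1].
\]
So the whole problem reduces to this scalar inequality on the unit interval.

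For the scalar inequality, I would use concavity of $g(z) = 1 - e^{-z}$ (since $g''(z) = -e^{-z} < 0$). By concavity, for any $z \in [0,1]$,
\[
g(z) \;=\; g\bigl(z\cdot 1 + (1-z)\cdot 0\bigr) \;\geq\; z\, g(1) + (1-z)\, g(0) \;=\; \left(1 - \tfrac{1}{e}\right) z.
\]
Since $e < 3$ gives $1 - 1/e > 1/2$, we conclude $1 - e^{-z} \geq (1 - 1/e)\, z \geq z/2$, as desired. Undoing the substitution yields $1 - \alpha^x \geq -\frac{x \ln \alpha}{2}$, and dividing by $x > 0$ gives the claim.

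There is essentially no obstacle here; the only subtlety is making sure the hypothesis $1 + x \ln \alpha \geq 0$ is used precisely to confine $z = -x \ln \alpha$ to $[0,1]$, which is exactly where the chord-above-or-below-graph inequality $g(z) \geq (1 - 1/e)z$ holds. An alternative, equally short, route would be to define $f(z) = 1 - e^{-z} - z/2$, note $f(0) = 0$ and $f'(z) = e^{-z} - 1/2$, observe that $f$ increases on $[0, \ln 2]$ and decreases on $[\ln 2, 1]$, and check $f(1) = 1/2 - 1/e > 0$ to conclude $f \geq 0$ on $[0,1]$; I prefer the concavity argument for its brevity.
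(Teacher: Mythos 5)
Your proposal is correct and takes essentially the same route as the paper: the paper substitutes $u=x\ln\alpha$ and reduces the claim to $e^{u}\leq 1+\tfrac{u}{2}$ on $[-1,0]$ (your $1-e^{-z}\geq \tfrac{z}{2}$ on $[0,1]$ after $z=-u$), the only difference being that the paper simply asserts this scalar inequality while you supply a proof via concavity and the chord through $(0,0)$ and $(1,1-\tfrac{1}{e})$. One tiny slip: the bound $1-\tfrac{1}{e}>\tfrac{1}{2}$ follows from $e>2$, not from $e<3$ (which only gives $1-\tfrac{1}{e}<\tfrac{2}{3}$); the conclusion is of course still true, so the argument stands.
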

\begin{proof}
    Let $y=\ln\alpha<0$. We know that
    \begin{align*}
        &\frac{1-\alpha^x}{x}\geq -\frac{\ln \alpha}{2} \\
        \Longleftrightarrow & 1-e^{xy} \geq -\frac{xy}{2} \\
        \Longleftrightarrow & e^{xy} \leq 1+\frac{xy}{2},
    \end{align*}
    which is true since $e^u\leq 1+\frac{u}{2}$ for all $u\in [-1,0]$ and $xy=x\ln\alpha\geq -1$.
\end{proof}

\begin{lemma}[Theorem 1 in~\citep{beygelzimer2011contextual}]\label{lem:Freedman}
Let $X_1,\dots,X_T\in[-B,B]$ for some $B>0$ be a martingale difference sequence and with $\sum_{t=1}^T\E_t[X_t^2]\leq V$ for some fixed quantity $V>0$. We have for all $\delta\in(0,1)$, with probability at least $1-\delta$, 
\begin{align*}
    \sum_{t=1}^TX_t\leq \min_{\lambda\in[0,1/B]}\left(\lambda V+\frac{\log(1/\delta)}{\lambda}\right) \leq 2\sqrt{V\log(1/\delta)}+B\log(1/\delta).
\end{align*}
\end{lemma}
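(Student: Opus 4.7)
The plan is to establish this Freedman-type bound via the standard Chernoff/exponential supermartingale approach. Concretely, I would fix an arbitrary $\lambda \in [0, 1/B]$, build an exponential supermartingale out of $\{X_t\}$, apply Markov's inequality, and then exploit the fact that $V$ is a deterministic (nonrandom) quantity so that the optimization over $\lambda$ can be performed before invoking the concentration step.

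\textbf{Step 1 (MGF bound, the main obstacle).} I would show that for any $\lambda \in [0, 1/B]$ and every $t$,
\begin{align*}
    \E_t\!\left[e^{\lambda X_t}\right] \;\leq\; \exp\!\bigl(\lambda^2 \,\E_t[X_t^2]\bigr).
\end{align*}
The key inequality is $e^{y} \leq 1 + y + y^2$ valid for all $y \leq 1$, which applies pointwise since $|\lambda X_t| \leq \lambda B \leq 1$. Taking conditional expectation and using the martingale-difference property $\E_t[X_t] = 0$ yields $\E_t[e^{\lambda X_t}] \leq 1 + \lambda^2 \E_t[X_t^2] \leq \exp(\lambda^2 \E_t[X_t^2])$ by $1+a \leq e^a$. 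This is the only nontrivial analytic step; the boundedness $|X_t|\le B$ is needed precisely to make $|\lambda X_t|\le 1$, which is why the admissible range for $\lambda$ is $[0,1/B]$.

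\textbf{Step 2 (Supermartingale and Markov).} Define $M_t = \exp\!\bigl(\lambda \sum_{s=1}^t X_s - \lambda^2 \sum_{s=1}^t \E_s[X_s^2]\bigr)$ with $M_0 = 1$. Step 1 gives $\E_t[M_t \mid M_{t-1}] \leq M_{t-1}$, so $\{M_t\}$ is a nonnegative supermartingale with $\E[M_T] \leq 1$. Markov's inequality then yields $\Pr[M_T \geq 1/\delta] \leq \delta$. Taking the logarithm of the complementary event and rearranging, with probability at least $1-\delta$,
\begin{align*}
    \sum_{t=1}^T X_t \;\leq\; \lambda \sum_{t=1}^T \E_t[X_t^2] + \frac{\log(1/\delta)}{\lambda} \;\leq\; \lambda V + \frac{\log(1/\delta)}{\lambda},
\end{align*}
where the last inequality uses the deterministic assumption $\sum_{t=1}^T \E_t[X_t^2] \leq V$.

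\textbf{Step 3 (Optimizing $\lambda$ and the closed-form bound).} Because $V$ and $B$ are fixed constants (not random), the optimal $\lambda$ inside $[0, 1/B]$ is determined before the random experiment, so I can choose it in advance and the bound $\sum_t X_t \leq \min_{\lambda \in [0,1/B]}\bigl(\lambda V + \log(1/\delta)/\lambda\bigr)$ holds with probability $\geq 1-\delta$. To derive the second, explicit inequality, I would compute: the unconstrained minimizer is $\lambda^\star = \sqrt{\log(1/\delta)/V}$ with value $2\sqrt{V\log(1/\delta)}$. If $\lambda^\star \leq 1/B$ (equivalently $V \geq B^2 \log(1/\delta)$), this is attainable and gives exactly $2\sqrt{V\log(1/\delta)}$. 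Otherwise, taking the boundary choice $\lambda = 1/B$ yields $V/B + B\log(1/\delta) \leq 2B\log(1/\delta)$, which is in turn at most $2\sqrt{V\log(1/\delta)} + B\log(1/\delta)$. In either regime the minimum is bounded by $2\sqrt{V\log(1/\delta)} + B\log(1/\delta)$, completing the proof.
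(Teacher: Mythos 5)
The paper does not prove this lemma at all --- it is imported verbatim as Theorem 1 of \citet{beygelzimer2011contextual} --- so there is no in-paper argument to compare against. Your proof is the standard exponential-supermartingale derivation of Freedman's inequality, and Steps 1 and 2 are correct: the inequality $e^{y}\le 1+y+y^{2}$ for $y\le 1$ combined with $\E_t[X_t]=0$ gives $\E_t[e^{\lambda X_t}]\le \exp(\lambda^{2}\E_t[X_t^{2}])$, the process $M_t$ is a nonnegative supermartingale because $\E_t[X_t^{2}]$ is measurable with respect to the past, and since $V$, $B$, $\delta$ are deterministic you may indeed fix the optimal $\lambda$ in advance, so the bound with the minimum over $\lambda\in[0,1/B]$ follows from the single-$\lambda$ bound applied at the (deterministic) minimizer.

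The one flaw is in the final numerical comparison of Step 3. In the regime $\lambda^\star=\sqrt{\log(1/\delta)/V}>1/B$, i.e.\ $V<B^{2}\log(1/\delta)$, you bound the boundary value by $V/B+B\log(1/\delta)\le 2B\log(1/\delta)$ and then assert $2B\log(1/\delta)\le 2\sqrt{V\log(1/\delta)}+B\log(1/\delta)$. That last inequality is equivalent to $B^{2}\log(1/\delta)\le 4V$, which can fail in exactly this regime (take $V=B^{2}\log(1/\delta)/100$). The conclusion is still true, but you should compare directly rather than passing through $2B\log(1/\delta)$: since $V<B^{2}\log(1/\delta)$ gives $\sqrt{V}\le B\sqrt{\log(1/\delta)}$, you get
\begin{align*}
\frac{V}{B}=\frac{\sqrt{V}\cdot\sqrt{V}}{B}\le \frac{\sqrt{V}\cdot B\sqrt{\log(1/\delta)}}{B}=\sqrt{V\log(1/\delta)},
\end{align*}
so the boundary value satisfies $V/B+B\log(1/\delta)\le \sqrt{V\log(1/\delta)}+B\log(1/\delta)$, which is at most the claimed $2\sqrt{V\log(1/\delta)}+B\log(1/\delta)$. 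With that one-line repair the proof is complete.
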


\end{document}